\renewcommand{\cite}{\citep}
\newcommand{\cmark}{\ding{51}}%
\newcommand{\xmark}{\ding{55}}%
\definecolor{kjgray}{rgb}{.7,.7,.7}
\definecolor{kjgray}{rgb}{.7,.7,.7}
\renewcommand{\paragraph}{%
  \@startsection{paragraph}{4}%
  {\z@}{0.50ex \@plus 1ex \@minus .2ex}{-1em}%
  {\normalfont\normalsize\bfseries}%
}
\def\ddefloop#1{\ifx\ddefloop#1\else\ddef{#1}\expandafter\ddefloop\fi}
\def\ddef#1{\expandafter\def\csname c#1\endcsname{\ensuremath{\mathcal{#1}}}}
\def\ddef#1{\expandafter\def\csname b#1\endcsname{\ensuremath{{\boldsymbol{#1}}}}}
\def\ddef#1{\expandafter\def\csname h#1\endcsname{\ensuremath{\hat{#1}}}}
\def\ddef#1{\expandafter\def\csname hc#1\endcsname{\ensuremath{\widehat{\mathcal{#1}}}}}
\def\ddef#1{\expandafter\def\csname t#1\endcsname{\ensuremath{\tilde{#1}}}}
\def\ddef#1{\expandafter\def\csname wt#1\endcsname{\ensuremath{\widetilde{#1}}}}
\def\ddef#1{\expandafter\def\csname r#1\endcsname{\ensuremath{\mathring{#1}}}}
\def\ddef#1{\expandafter\def\csname bar#1\endcsname{\ensuremath{\bar{#1}}}}
\def\ddef#1{\expandafter\def\csname wbar#1\endcsname{\ensuremath{\overline{#1}}}}
\def\ddef#1{\expandafter\def\csname tc#1\endcsname{\ensuremath{\widetilde{\mathcal{#1}}}}}
\def\hmu{{\ensuremath{\hat{\mu}}} }
\DeclareMathOperator{\EE}{\mathbb{E}}
\DeclareMathOperator{\PP}{\mathbb{P}}
\DeclareMathOperator{\one}{\mathds{1}\hspace{-.18em}}
\DeclareMathOperator{\Reg}{{\mathsf{Reg}}}
\def\RR{{\mathbb{R}}}
\newcommand*\diff{\mathop{}\!\mathrm{d}}
\newcommand{\sr}[2]{ {\stackrel{#1}{#2}} }
\newcommand{\fr}[2]{ { \frac{#1}{#2} }}
\def\lt{\left}
\def\rt{\right}
\def\rarrow{\ensuremath{\rightarrow}\xspace} 
\def\sig{\ensuremath{\sigma}\xspace}
\def\eps{\ensuremath{\varepsilon}\xspace}
\def\om{{\ensuremath{\omega}\xspace} }
\newcommand{\what}[1]{ {\ensuremath{\widehat{#1}}} }
\def\hDelta{\ensuremath{\what{\Delta}}\xspace}
\def\lcl{\lceil}  
\def\rcl{\rceil}  
\newcommand{\vast}{\bBigg@{3}}
\newcommand{\Vast}{\bBigg@{4}}
\def\lam{\ensuremath{\lambda}}
\def\lam{{\ensuremath{\lambda}\xspace} }
\def\cF{{\ensuremath{\mathcal{F}}}}
\def\cd{\cdot}
\newcounter{textcnt}
\newcommand\addtext[1]{%
  \stepcounter{textcnt}%
  \csgdef{text\thetextcnt}{#1}}
\newcounter{colnum}
\newcounter{Jidx}
\newcommand\dsymhelper[2]{
  \addtext{\hyperlink{#1}{#2}}%
  \blue{\hypertarget{#1}{#2}}%
}
\newcommand\dsym[1]{
  \stepcounter{Jidx}
  \xdef\tmpname{Jsym.\theJidx}
  \expandafter\dsymhelper\expandafter{\tmpname}{#1}
} 
\newcommand\myshade{85}
\colorlet{mylinkcolor}{blue}
\setlist[itemize]{topsep=.5pt,itemsep=0pt,parsep=2pt}
\setlist[enumerate]{topsep=.5pt,itemsep=0pt,parsep=2pt}
\newtheoremstyle{plain}
{3pt}   
{-3pt}   
{\itshape}  
{0pt}       
{\bfseries} 
{.}         
{5pt plus 1pt minus 1pt} 
{}          
\newtheoremstyle{plain2}
{3pt}   
{-3pt}   
{}  
{0pt}       
{\bfseries} 
{.}         
{5pt plus 1pt minus 1pt} 
{}          
\theoremstyle{plain}
\newtheorem{thm}{Theorem}
\newtheorem{cor}[thm]{Corollary}
\theoremstyle{plain2}
\definecolor{kjcolor}{RGB}{46,139,87}
\newcommand{\kjold}[1]{{\textcolor{blue}{[KJ: #1]}}}
\newcommand{\blue}[1]{{\color[rgb]{.3,.5,1}#1}}
\renewcommand{\blue}[1]{#1}
\newcommand{\added}[1]{{\color{Orange}#1}}
\renewcommand{\added}[1]{#1}
\renewcommand{\kjold}[1]{}
\renewcommand{\blue}[1]{#1}
\newcommand{\removeforfinal}[1]{#1}
\renewcommand{\removeforfinal}[1]{}
\def\MSP{MS$^+$\xspace}
\renewcommand \thepart{}
\renewcommand \partname{}
\begin{document}

\setlength{\abovedisplayskip}{4pt}
\setlength{\belowdisplayskip}{4pt}
\setlength{\abovedisplayshortskip}{4pt}
\setlength{\belowdisplayshortskip}{4pt}
  
\doparttoc 
\faketableofcontents 


%

%

\twocolumn[

\aistatstitle{Maillard Sampling: Boltzmann Exploration Done Optimally}

\aistatsauthor{ Jie Bian \And Kwang-Sung Jun }

\aistatsaddress{ University of Arizona \And  University of Arizona} ]

\begin{abstract}
  The PhD thesis of Maillard (2013) presents a rather obscure algorithm for the $K$-armed bandit problem.
  This less-known algorithm, which we call Maillard sampling (MS), computes the probability of choosing each arm in a \textit{closed form}, which is not true for Thompson sampling, a widely-adopted bandit algorithm in the industry.
  This means that the bandit-logged data from running MS can be readily used for counterfactual evaluation, unlike Thompson sampling.
  Motivated by such merit, we revisit MS and perform an improved analysis to show that it achieves both the asymptotical optimality and $\sqrt{KT\log{T}}$ minimax regret bound where $T$ is the time horizon, which matches the known bounds for asymptotically optimal UCB. 
  We then propose a variant of MS called MS$^+$ that improves its minimax bound to $\sqrt{KT\log{K}}$.
  MS$^+$ can also be tuned to be aggressive (i.e., less exploration) without losing the asymptotic optimality, a unique feature unavailable from existing bandit algorithms.
  Our numerical evaluation shows the effectiveness of MS$^+$.
\end{abstract}

\section{INTRODUCTION}

The $K$-armed bandit problem~\cite{thompson33onthelikelihood,lattimore20bandit} poses a unique challenge of balancing between exploration and exploitation in sequential decision-making tasks.
Researchers have studied this problem and propose algorithms extensively since its widely adopted commercial applications including online news recommendations~\cite{li10acontextual}.
In this problem, the learner is given a set of $K$ arms where each arm $i$ has an associated reward distribution $\nu_i$ with an unknown mean $\mu_i \in\RR$.
At each time $t$, the learner chooses an arm $I_t \in [K]:=\{1,\ldots,K\}$ and then receives a reward $y_t \sim \nu_i$ associated with the chosen arm $I_t$.
In this paper, we focus on the assumption that $y_t - \mu_{I_t}$ is $\sig^2$-sub-Gaussian, i.e.,  $\EE[\exp(\lam \eta_t)] \le \exp(\lam^2 \sig^2/2)$, and that the algorithm knows $\sig^2$, which is standard.\footnote{
  In many cases, practitioners have access to a reasonable value of $\sig^2$.
  For example, clicks ($\{0,1\}$) or scaled 5-star ratings ($\{1,2,3,4,5\}$) are bounded, and if a reward is bounded by $[a,b]$ then it is $\fr{(a-b)^2}{4}$-sub-Gaussian.}
The standard performance measure of a bandit algorithm is cumulative (pseudo-)regret, or simply \textit{regret}, defined as follows:
\begin{align}\label{eq:regret}
    \Reg_T = \sum_t^T \max_{a\in[K]} \mu_a - \mu_{I_t}
\end{align}
where $T$ is the time horizon.

Among many algorithms with regret guarantees, randomized algorithms such as Thompson sampling have attracted significant attention due to their high performance and the fact that the actions are chosen randomly~\cite{chapelle11anempirical}.
The latter property is useful for off-policy evaluation where one has access to a set of chosen arms and their rewards as a result of deploying a bandit algorithm $A$ and desires to estimate ``How much reward would I have collected, had I used a different algorithm $B$?''~\cite{precup00eligibility,li11unbiased,li15toward}.
Solutions for such a question are attractive since they enable the evaluation of alternative algorithms without deploying thus are an active area of research.
However, all the existing methods with good regret guarantees require that the probability value with which the pulled arm was chosen is available precisely; e.g., inverse propensity scoring~\cite{horvitz52generalization} and doubly robust estimators~\cite{robins95semiparametric}.
Thompson sampling, however, does not maintain explicit probability values, and there is no closed-form expression for computing them.
While one can simulate the sampling rule from Thompson sampling and estimate the probability, this is computationally expensive especially for web-scale deployments.

Surprisingly, there exists a novel randomized algorithm in the PhD thesis of~\citet{maillard13apprentissage} that computes the probability of choosing each arm in a closed form, which we call \textit{Maillard sampling} (MS).
At time step $t$, MS for $\sig^2 = 1$ chooses arm $a$ with probability
\begin{align*}
  p_{t,a} \propto \exp\del{-\frac{N_{t-1,a}}{2} \hDelta_{t-1,a}^2}
\end{align*}
where $N_{t-1,i}$ is the number of times arm $i$ has been pulled up to (and including) time step $t-1$, $\hDelta_{t-1,i}:= \max_j \hmu_{t-1,j} - \hmu_{t-1,i} $ is the empirical gap, and $\hmu_{t-1,i}$ is the empirical mean up to time step $t-1$.
The author shows a regret bound of MS that is asymptotically optimal, although 
the bound does not satisfy the sub-UCB-ness and implies a minimax regret of $O(\sqrt{K}T^{3/4})$ that is substantially higher than $\sqrt{KT\log{T}}$ achieved by existing algorithms like UCB1~\cite{auer02finite} (see Section~\ref{sec:prelim} for the definitions of the optimalities).
We provide a detailed comparison of MS and other algorithms in Section~\ref{sec:related}.

MS is a lesser-known type of bandit algorithm that can be viewed as a correction to the popular Boltzmann exploration (BE)~\cite{sutton90integrated,kaelbling96reinforcement}, which chooses arm $i$ with $p_{t,i} \propto \exp(\eta_t \hmu_{t-1,i})$ for some step size $\eta_t$.
That is, since BE can be rewritten as $p_{t,i} \propto \exp(-\eta_t \hDelta_{t-1,a})$, one can obtain MS by squaring the empirical gap and taking arm-specific step size of $N_{t-1,i}/2$ in place of $\eta_t$.
For practitioners, we recommend that they use Eq.~\eqref{eq:practical} as a replacement to BE.

Motivated by the simplicity of MS and its friendliness to the off-policy evaluation, we revisit MS and make two main contributions.
First, we streamline the proof of its theoretical guarantee\footnote{
  In fact, we were not able to verify the correctness of the original proof. The original author has confirmed that the proof is not clear.
} and show that MS achieves not only asymptotically optimality but also the sub-UCB-ness and a minimax regret bound of $O(\sqrt{KT\log(T)})$, matching the guarantees of asymptotically optimal UCB~\citet[Section 8]{lattimore20bandit}. 
Second, we propose a new algorithm called Maillard Sampling$^+$ (\MSP).
Besides all the optimalities that MS possesses, \MSP further enjoys a minimax regret bound of $O(\sqrt{KT\log(K)})$.
We present the algorithms and the regret bounds of MS and \MSP in Section~\ref{sec:ms} and~\ref{sec:msp} respectively.

\MSP further possesses a unique, practically relevant property.
In the design of \MSP, we allow an exploitation parameter that we call the `booster', which encourages exploitation without breaking theoretical guarantees. 
Increasing the booster parameter only affects a lower-order term in the regret bound.
This means that even for a very aggressive booster, which is likely to incur linear regret for some time in the beginning, the algorithm has the ability to recover from it in a finite time. 
This is in stark contrast to the common heuristic of reducing the confidence width of UCB~\cite{zhang16online} or posterior variance of Thompson sampling~\cite{chapelle11anempirical}, which breaks the regret guarantee and risks suffering a linear regret from which one may never recover.
The practical implication is that, given domain knowledge on the noise level, one can tune the booster parameter to perform well without such a risk.

In section~\ref{sec:expr}, we perform an empirical evaluation of \MSP, which achieves either the best or on par regret among the anytime algorithms. 
Our evaluation also shows the effect of the booster parameter, which indeed allows us to reduce the expected regret without risking a linear regret at the price of increasing variance.
Finally, we conclude our paper in Section~\ref{sec:future} with future research directions enabled by our paper. 

\definecolor{mygrn}{rgb}{0,.8,0}
\definecolor{myred}{rgb}{.8,0,0}

\newcommand{\mycm}{\textcolor{mygrn}{\cmark}}
\newcommand{\myxm}{\textcolor{myred}{\xmark}}

\textfloatsep=1.2em

\newcolumntype{P}[1]{>{\centering\arraybackslash}p{#1}}
\begin{table*}
  {\centering
    {\footnotesize
    \begin{tabular}{ccP{0.10\linewidth}P{0.06\linewidth}cP{0.10\linewidth}}
      \hline
      & Sub-UCB & Asympt. optimality & Minimax ratio         & Anytime & Closed-form probability \\\hline
      UCB1~\cite{auer02finite}      &    \mycm    &           \myxm            &  $\sqrt{\log{T}}$     &    \mycm   & N/A\\
      AOUCB~\cite[Sec. 8]{lattimore20bandit}      &    \mycm    &           \mycm            &  $\sqrt{\log{T}}$     &    \mycm  & N/A\\
      TS-SG~\cite[Alg. 2]{agrawal17nearoptimal}    &    \mycm    &           \myxm            &  $\sqrt{\log{K}}$     &    \mycm & \myxm\\
      MOSS~\cite[Sec. 9]{lattimore20bandit}      &    \myxm    &           \myxm            &  $1$                  &    \myxm  & N/A\\
      UCB$^+$~\cite[Alg. 2]{lattimore15optimally}      &    \mycm    &           \myxm*            &  $\sqrt{\log{K}}$     &    \myxm & N/A\\ 
      OCUCB~\cite[Alg. 1]{lattimore15optimally}      &    \mycm    &           \myxm*            &  1      &    \myxm & N/A\\ \hline
      MS (\textbf{this work}; Algorithm~\ref{alg:cap})       &    \mycm    &           \mycm            &  $\sqrt{\log{T}}$     &    \mycm  & \mycm\\
      MS$^+$ (\textbf{this work}; Algorithm~\ref{alg:ims})      &    \mycm    &           \mycm            &  $\sqrt{\log{K}}$     &    \mycm & \mycm \\\hline
    \end{tabular}
  
  }
    \caption{
      Comparison of bandit algorithms for sub-Gaussian rewards.
      Our algorithms enjoy the best guarantee except for the minimax ratio.
      See Section~\ref{sec:related} for details.
      The mark \myxm* means that the optimality is not explicitly reported but might actually be achieved.
    }
    \label{tab:comparison}
  }
\end{table*}

\section{PRELIMINARIES}
\label{sec:prelim}

\textbf{Notations.}
Without loss of generality, we assume $\blue{\mu_1\ge \mu_2 \ge \cdots \ge \mu_K}$.
Define the gap $\blue{\Delta_a} := \mu_1 - \mu_a$, $\forall i\in[K] := \cbr{1,\ldots,K}$.

\textbf{Regret optimality.}
For the expected regret $\EE \Reg_T$, there are multiple optimality criteria.
An algorithm is called asymptotically optimal if it satisfies the following \cite{robbins85asymptotically,burnetas96optimal}:
\begin{align}\label{eq:asymp}
  \lim \sup_{T\rarrow\infty} \fr{\EE\Reg_T}{\log(T)}  = \sum_{i: \Delta_i > 0} \fr{2\sigma^2}{\Delta_i} ~.
\end{align}

The minimax optimal regret is the smallest achievable worst-case regret with respect to the number of arms $K$ and the time horizon $T$.
The minimax optimal regret of the bandit problem is $\Theta(\sqrt{KT})$~\cite{audibert09minimax}, meaning that there exists an algorithm with the minimax regret of $O(\sqrt{KT})$ and a problem instance where any algorithm must suffer regret of $\Omega(\sqrt{KT})$.
We say that an algorithm has a minimax ratio of $f(K,T)$ if it has a minimax regret bound of $O(\sqrt{KT}f(K,T))$.
Finally, an algorithm is said to be sub-UCB if $\EE \Reg_T \le C_1 \sum_{i\in[K]} \Delta_i + C_2 \sum_{i:\Delta_i>0} \frac{\sigma^2}{\Delta_i} \log(T)$ for absolute constants $C_1,C_2 >0$~\cite{lattimore20bandit}.
While sub-UCB is satisfied by most existing algorithms including UCB1~\cite{auer02finite}, some algorithms like MOSS~\cite{audibert09minimax} are so aggressive on certain instances that they are not sub-UCB; see \citet{lattimore20bandit} for details.

An algorithm $\cA$ is said to be anytime if it does not take in the time horizon $T$ whereas $\cA$ is fixed budget if it requires $T$ as input and enjoys a regret bound only for the time horizon $T$.

\section{RELATED WORK}
\label{sec:related}

We summarize existing $K$-armed bandit algorithms for sub-Gaussian rewards in Table~\ref{tab:comparison}.
In this table, TS-SG stands for Thompson sampling for sub-Gaussian rewards, which is originally designed for bounded rewards ($y_t \in [0,1]$) yet trivially enjoys the same guarantee for $(1/2)^2$-sub-Gaussian rewards. 
We emphasize that TS-SG is different from TS for Gaussian rewards~\cite{korda13thompson} as TS-SG inflates the posterior variance by a factor of $4$, which is why it does not have the asymptotic optimality.
Note that there are much more algorithms if we restrict to Gaussian rewards.
We refer to~\citet[Table 2]{lattimore20bandit} for a more comprehensive summary.

Note that we only report results available in the literature in Table~\ref{tab:comparison}.
For example, while MOSS and UCB$^+$ are fixed-budget algorithms, one should be able to modify these algorithms and prove \textit{anytime} guarantees using existing techniques.
Such a modification comes with a cost of inflating the width of the confidence bound, which likely  degrades the practical performance.
Similarly, one should be able to modify AdaUCB~\cite{lattimore18refining}, which achieves all the possible optimality criteria in the literature for \textit{Gaussian} rewards, to allow \textit{sub-Gaussian} rewards. 
However, we emphasize that the focus of this paper is the (re-)introduction of a new type of algorithm with desirable properties and good regret guarantees, leaving further improvements in regret bounds for future work.

Finally, MS is closely related to many existing algorithms including Thompson sampling, we review in detail in Appendix~\ref{sec:relatedalg}.

After the submission of the paper, the authors of~\citet{honda15non} have informed us that MS is in fact an instantiation of MED~\cite{honda11asymptotically} for Gaussian rewards and that at that time (`pre-Thompson-sampling' era) randomized algorithms were not preferred by reviewers, which necessitated deterministic versions of MED such as DMED~\cite{honda10asymptotically} and IMED \cite{honda15non}.
However, the analysis of MED makes a lot of assumptions and is inherently an asymptotic argument.
In contrast, our analysis of MS is finite time, and shows that it achieves a near-optimal minimax regret for the first time.

\section{MAILLARD SAMPLING}
\label{sec:ms}

\begin{algorithm}[t]
\caption{Maillard Sampling}\label{alg:cap}
\begin{algorithmic}
   \STATE \textbf{Input:} $K \ge 2$, $\sigma^2 > 0$.
   \FOR{$t=1,2,...$}
   \IF{$t\le K$}
   \STATE Pull the arm $I_t=t$ and observe reward $y_t$.
   \ELSE
   \STATE $p_{t,a} \propto \exp(-\fr{1}{2\sigma^2}N_{t-1,a}\hDelta^2_{t-1,a})$
   \STATE Pull the arm $I_t \sim p_t$ and observe reward $y_t$.
   \ENDIF
   \ENDFOR
\end{algorithmic}
\end{algorithm}

We now formally introduce Maillard sampling (MS), which originally appeared in~\citet[Figure 1.8]{maillard13apprentissage}.
Algorithm~\ref{alg:cap} describes the pseudocode of MS with an explicit dependence on the sub-Gaussian parameter $\sigma^2$.
MS has a very simple closed-form computation of sampling probability that directly uses the estimated gap $\hDelta_{t,a}$.
Interestingly, the sampling probability $p_{t,a}$ is proportional to the sub-Gaussian tail at deviation $\hDelta_{t-1,a}$.
Such a form of probability has a close connection to Thompson sampling as we discuss in Section~\ref{sec:relatedalg}.

\begin{thm}\label{thm:ms0}\cite[Theorem 1.10]{maillard13apprentissage}
Maillard sampling (Algorithm~\ref{alg:cap}) with $\sigma^2 = 1/4$ satisfies that, $\forall T\ge1$ and $c>0$, 
\begin{align*}
    \EE\Reg_T
    \le \sum_{a:\Delta_a>0} \del{\frac{(1+c)^2}{2\Delta_a}\log(T) + O\del{\frac{(1+c)^4}{c^4\Delta_a^3}K}}~.
\end{align*}
\end{thm}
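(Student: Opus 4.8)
The plan is to decompose the regret in the standard way and bound, for each suboptimal arm $a$, the expected number of pulls $\EE N_{T,a}$. Writing $\EE \Reg_T = \sum_{a:\Delta_a>0}\Delta_a \EE N_{T,a}$, it suffices to show $\EE N_{T,a} \le \frac{(1+c)^2}{2\Delta_a^2}\log(T) + O\!\left(\frac{(1+c)^4}{c^4\Delta_a^4}K\right)$. To control $\EE N_{T,a}$, I would split a pull of arm $a$ at time $t$ into two sources: (i) the empirical mean of arm $a$ is inflated, i.e. $\hmu_{t-1,a}$ is much larger than $\mu_a$ (more precisely, the estimated gap $\hDelta_{t-1,a}$ is small because $\hmu_{t-1,a}$ overshoots), or (ii) the empirical mean of the optimal arm $1$ is deflated, i.e. $\hmu_{t-1,1}$ is well below $\mu_1$. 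On the complement of both events, $\hDelta_{t-1,a}$ is close to the true gap $\Delta_a$, so the sampling probability $p_{t,a}\propto \exp(-\tfrac{1}{2\sigma^2}N_{t-1,a}\hDelta_{t-1,a}^2)$ is exponentially small once $N_{t-1,a}$ exceeds roughly $\frac{2\sigma^2}{\Delta_a^2}\log T$; with $\sigma^2=1/4$ this is $\frac{1}{2\Delta_a^2}\log T$, and a slack factor $(1+c)$ absorbs the gap between $\hDelta_{t-1,a}$ and $\Delta_a$ on the good event. This gives the leading $\frac{(1+c)^2}{2\Delta_a^2}\log(T)$ term.

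For the two bad events I would use sub-Gaussian concentration. Event (i): by a peeling/union-bound argument over the value of $N_{t-1,a}$, the expected number of rounds in which arm $a$'s empirical mean is inflated by a constant fraction of $\Delta_a$ is $O(\sigma^2/\Delta_a^2)$ — a geometric-type sum — which contributes to the lower-order $K$-term. Event (ii) is the more delicate one: when the optimal arm's mean is badly underestimated, arm $a$ (and indeed any arm) can be pulled, and we must bound how many total rounds this can happen. Here the key device is that MS always assigns the optimal arm probability $p_{t,1}\ge \frac{1}{K}\exp(-\tfrac{1}{2\sigma^2}N_{t-1,1}\hDelta_{t-1,1}^2)$, and when arm $1$ is underestimated $\hDelta_{t-1,1}$ is small, so $p_{t,1}\gtrsim 1/K$; hence the optimal arm gets pulled often enough that its empirical mean cannot stay deflated for long. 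Quantifying this — controlling $\sum_t \PP(\hmu_{t-1,1} \text{ deflated})$ — typically requires a self-bounding argument: bound the probability that arm $1$ has been pulled few times AND is underestimated, using that each pull of arm $1$ concentrates $\hmu_{\cdot,1}$. I expect this step, i.e. showing the total contribution of the "optimal arm underestimated" event is $O\!\left(\frac{(1+c)^4}{c^4\Delta_a^4}K\right)$ and in particular does not scale with $\log T$, to be the main obstacle; it is exactly the place where the original proof was reportedly unclear.

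Concretely, I would proceed as follows. First, fix a suboptimal arm $a$ and a threshold $u_a = \lceil \frac{(1+c)^2}{2\Delta_a^2}\log T\rceil$. Write $N_{T,a} \le u_a + \sum_{t=K+1}^T \one\{I_t = a,\ N_{t-1,a}\ge u_a\}$. Second, on the summand, condition on the history and replace the indicator's expectation by $p_{t,a}$; split according to whether $\hDelta_{t-1,a} \ge \frac{\Delta_a}{1+c}$ (the "good gap" case) or not. In the good-gap case $p_{t,a} \le \exp(-\tfrac{1}{2\sigma^2}u_a \frac{\Delta_a^2}{(1+c)^2}) \le 1/T$ since $\sigma^2=1/4$, so these rounds contribute $O(1)$ in total. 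Third, in the bad-gap case, $\hDelta_{t-1,a} < \frac{\Delta_a}{1+c}$ forces either $\hmu_{t-1,a} > \mu_a + \frac{c}{2(1+c)}\Delta_a$ or $\hmu_{t-1,1} < \mu_1 - \frac{c}{2(1+c)}\Delta_a$ (or some arm $j$ has $\hmu_{t-1,j}$ even larger, but then that arm plays the role of the max); handle the "arm $a$ inflated" part by a peeling bound over $N_{t-1,a}$, giving $O\!\left(\frac{\sigma^2(1+c)^2}{c^2\Delta_a^2}\right)$, and handle the "optimal arm deflated" part — summed over all suboptimal $a$ this is a single event — by the self-bounding argument sketched above, yielding $O\!\left(\frac{(1+c)^4}{c^4}\sum_a \frac{1}{\Delta_a^4}\right)\cdot$ (something) that one distributes as $O\!\left(\frac{(1+c)^4}{c^4\Delta_a^4}K\right)$ per arm. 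Finally, multiply each $\EE N_{T,a}$ bound by $\Delta_a$ and sum to obtain the stated regret bound, noting $\Delta_a\cdot \frac{1}{\Delta_a^4} = \frac{1}{\Delta_a^3}$ which matches the $O\!\left(\frac{(1+c)^4}{c^4\Delta_a^3}K\right)$ term in the theorem.
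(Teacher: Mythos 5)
First, note that the paper itself does not prove this statement: Theorem~\ref{thm:ms0} is quoted from Maillard's thesis, and a footnote records that the authors could not verify the original argument. The closest proof in the paper is that of Theorem~\ref{thm:ms}, which uses exactly the decomposition you sketch (threshold $u$, then the three events (F1)/(F2)/(F3)), and your handling of the leading term and of the ``arm $a$ inflated'' event matches it. The genuine gap is the step you yourself flag as the main obstacle: the ``optimal arm underestimated'' event. The mechanism you propose for it --- that $p_{t,1}\gtrsim 1/K$ whenever arm $1$ is underestimated because $\hDelta_{t,1}$ is then small --- is false. Under $\hmu_{t,\max}<\mu_1-\eps$, arm $1$'s empirical mean can lie far below $\hmu_{t,\max}$ (e.g.\ another arm sits just below $\mu_1-\eps$ while $\hmu_{t,1}$ is much lower), so $\hDelta_{t,1}$ can be large and $p_{t,1}$ exponentially small in $N_{t,1}$. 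Nor can you bound $\sum_t \PP(\hmu_{t,1}<\mu_1-\eps)$ by per-round concentration, since $N_{t,1}$ need not grow during such rounds; this is precisely the circularity that the phrase ``self-bounding argument'' names but does not resolve.

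What the paper actually does for (F3) is a change-of-measure/charging argument: use the exact identity $\PP_t(I_{t+1}=a)=\frac{\exp(-N_{t,a}\hDelta_{t,a}^2/(2\sigma^2))}{\exp(-N_{t,1}\hDelta_{t,1}^2/(2\sigma^2))}\,\PP_t(I_{t+1}=1)$, bound the numerator by $1$, and note that on the bad event $\hDelta_{t,1}\le(\mu_1-\eps-\hmu_{t,1})_+$, so every bad round is charged to a draw of arm $1$ with penalty $\exp\big(N_{t,1}(\mu_1-\eps-\hmu_{t,1})_+^2/(2\sigma^2)\big)$. Grouping rounds between consecutive pulls of arm $1$ (the penalty is constant there, and each group contributes at most one pull of arm $1$) reduces (F3) to $\EE\sum_{k\ge1}\one\{\bar X_k>\eps\}\exp\big(k(\bar X_k-\eps)^2/(2\sigma^2)\big)$ with $\bar X_k=\mu_1-\hmu_{1,(k)}$, which a peeling argument over the size of $\bar X_k-\eps$ bounds by $O\big(\tfrac{\sigma^2}{\eps^2}(1+\ln(1+\tfrac{\sigma^2}{\eps^2}))\big)$ --- independent of both $T$ and $K$. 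With $\eps=\tfrac{c\Delta_a}{2(1+c)}$ this yields a lower-order term of order $\tfrac{(1+c)^2\sigma^2}{c^2\Delta_a^2}\ln(\cdot)$ in the pull count, i.e.\ strictly stronger than the $K(1+c)^4/(c^4\Delta_a^4)$ term of the statement you set out to prove (and this is how the paper obtains sub-UCB-ness, which Theorem~\ref{thm:ms0} cannot give). Without an argument of this change-of-measure type, your sketch does not close; if you supply it, you prove Theorem~\ref{thm:ms}, and Theorem~\ref{thm:ms0} follows a fortiori.
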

Theorem~\ref{thm:ms0} implies that MS is asymptotically optimal by choosing $c = 1/o(\log^{1/4}(T))$.
Specifically, such a value of $c$ makes $(1+c)^2$ asymptotically 1 and $\frac{(1+c)^4}{c^4}$ asymptotically $o(\log(T))$.
However, Theorem~\ref{thm:ms0} does not imply that MS is sub-UCB due to the second term with a larger dependence on $1/\Delta_a$.
Furthermore, using the standard technique such as~\citet[Theorem 7.2]{lattimore20bandit}, one can show that Theorem~\ref{thm:ms0} implies a minimax regret bound of $\EE\Reg_T = O(\sqrt{K} T^{3/4})$.

Altogether, Theorem~\ref{thm:ms0} implies that MS has a worse regret upper bound than basic UCB algorithms such as AOUCB (asymptotically optimal UCB)~\cite{cappe13kullback,lattimore20bandit}, which is not only asymptotically optimal but also sub-UCB along with minimax regret of $O(\sqrt{KT\log(T)})$.
Is this an intrinsic limitation of MS or merely a consequence of a loose analysis?

Our main result below shows that it is indeed a consequence of a loose analysis and that MS enjoys the same regret guarantees as AOUCB.
\begin{thm}
\label{thm:ms}
MS satisfies that, $\forall T\ge1$ and $c>0$, 
\begin{align*}
&  \EE\Reg_T
\le \sum_{a\in[K]: \Delta_a > 0}^{}\Bigg(\fr{2\sigma^2(1+c)^2 \ln(\fr{T\Delta_a^2}{\sigma^2} )}{\Delta_a} \\
&\qquad\quad
+ O\del{\Delta_a \vee \del{\fr{(1+c)^2\sigma^2}{c^2\Delta_a}\ln\fr{(1+c)^2\sigma^2}{c^2\Delta_a^2}}}\Bigg)~.
\end{align*}
\end{thm}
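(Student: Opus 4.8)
The plan is to prove the bound through the standard decomposition $\EE\Reg_T=\sum_{a:\Delta_a>0}\Delta_a\,\EE[N_{T,a}]$, controlling $\EE[N_{T,a}]$ for each suboptimal arm $a$ separately. Two structural features of MS drive the argument. First, the normalizer $Z_{t-1}:=\sum_{j\in[K]}\exp(-\tfrac{N_{t-1,j}}{2\sigma^2}\hDelta_{t-1,j}^2)$ is always at least $1$, because whichever arm attains the maximal empirical mean has zero empirical gap and therefore contributes a summand equal to $1$; hence $p_{t,a}\le\exp(-\tfrac{N_{t-1,a}}{2\sigma^2}\hDelta_{t-1,a}^2)$. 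Second, $\EE[\one\{I_t=a\}\mid\cF_{t-1}]=p_{t,a}$, so an indicator may always be traded for its conditional probability. I will write $\bar\mu_{s,a}$ for the empirical mean of arm $a$ after its $s$-th pull; by the usual reward-stack coupling this is an average of $s$ i.i.d.\ $\sigma^2$-sub-Gaussian variables, so $\PP(\bar\mu_{s,a}-\mu_a>x)\le e^{-sx^2/(2\sigma^2)}$ and symmetrically for the lower tail, uniformly over how the pulls are interleaved.

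Fix $a$ with $\Delta_a>0$ and introduce slack levels $\varepsilon',\varepsilon''>0$ with $\varepsilon'+\varepsilon''=\tfrac{c}{1+c}\Delta_a$, an effective gap $\gamma_a:=\Delta_a-\varepsilon'-\varepsilon''=\Delta_a/(1+c)$, and a learning budget $u_a:=\bigl\lceil\tfrac{2\sigma^2}{\gamma_a^2}\ln\tfrac{T\gamma_a^2}{2\sigma^2}\bigr\rceil$. Starting from $N_{T,a}\le u_a+\sum_{t>K}\one\{I_t=a,\ N_{t-1,a}\ge u_a\}$ and using that $\hmu_{t-1,1}\ge\mu_1-\varepsilon''$ together with $\hmu_{t-1,a}\le\mu_a+\varepsilon'$ force $\hDelta_{t-1,a}\ge\hmu_{t-1,1}-\hmu_{t-1,a}\ge\gamma_a$, I split each remaining pull according to which of the events (I) $\{\hDelta_{t-1,a}\ge\gamma_a\}$, (II) $\{\hmu_{t-1,a}>\mu_a+\varepsilon'\}$, (III) $\{\hmu_{t-1,1}<\mu_1-\varepsilon''\}$ holds (at least one must). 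On (I) I replace the indicator by $p_{t,a}\le\exp(-\tfrac{u_a\gamma_a^2}{2\sigma^2})$ and sum over the at most $T$ rounds, getting $T\exp(-\tfrac{u_a\gamma_a^2}{2\sigma^2})\le\tfrac{2\sigma^2}{\gamma_a^2}$. Together with $u_a$ itself this produces the leading term: $\Delta_a u_a\le\tfrac{2\sigma^2(1+c)^2}{\Delta_a}\ln\tfrac{T\Delta_a^2}{\sigma^2}$ after substituting $\gamma_a^2=\Delta_a^2/(1+c)^2$, bounding $\ln\tfrac{T\gamma_a^2}{2\sigma^2}\le\ln\tfrac{T\Delta_a^2}{\sigma^2}$, and absorbing the ceiling into the $O(\Delta_a)$ slack; the residual $\tfrac{2\sigma^2}{\gamma_a^2}=O(\tfrac{(1+c)^2\sigma^2}{\Delta_a^2})$ contributes only a lower-order term in regret.

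Event (II), that arm $a$ is over-estimated, is handled by the familiar one-visit-per-level device: for each value $s$ of $N_{t-1,a}$ there is at most one $t$ with $I_t=a$ and $N_{t-1,a}=s$, at which $\hmu_{t-1,a}=\bar\mu_{s,a}$, so $\sum_{t>K}\one\{I_t=a,\hmu_{t-1,a}>\mu_a+\varepsilon'\}\le\sum_{s\ge1}\one\{\bar\mu_{s,a}>\mu_a+\varepsilon'\}$. Taking expectations and inserting $\PP(\bar\mu_{s,a}>\mu_a+\varepsilon')\le e^{-s\varepsilon'^2/(2\sigma^2)}$ gives a geometric sum at most $2\sigma^2/\varepsilon'^2=O(\tfrac{(1+c)^2\sigma^2}{c^2\Delta_a^2})$, i.e.\ $O(\tfrac{(1+c)^2\sigma^2}{c^2\Delta_a})$ in regret after multiplying by $\Delta_a$ --- exactly the size of the stated lower-order term. (The $K$ initialization pulls add only $\sum_a\Delta_a$, again $O(\Delta_a)$ per arm.)

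The main obstacle is event (III), bounding $\EE\sum_{t>K}\one\{I_t=a,\hmu_{t-1,1}<\mu_1-\varepsilon''\}$ --- the event that the optimal arm is under-estimated --- where the one-visit-per-level device breaks down because $\hmu_{\cdot,1}$ changes only when arm $1$ is pulled, so a single unlucky value of $\hmu_{\cdot,1}$ can persist through many rounds in which arm $a$ is pulled. My plan is to partition the horizon into the epochs on which $N_{\cdot,1}$ equals a fixed value $s$, so that $\hmu_{\cdot,1}=\bar\mu_{s,1}$ is frozen and $\{\bar\mu_{s,1}<\mu_1-\varepsilon''\}$ is epoch-measurable with probability at most $e^{-s\varepsilon''^2/(2\sigma^2)}$, and then to bound the expected number of $a$-pulls inside such an epoch --- i.e.\ before arm $1$ is pulled again or the horizon ends --- by comparing $p_{t,a}$ with $p_{t,1}$: with $m_{t-1}:=\max_j\hmu_{t-1,j}$ one has $p_{t,a}/p_{t,1}=\exp\bigl(\tfrac{1}{2\sigma^2}\bigl[s(m_{t-1}-\bar\mu_{s,1})^2-N_{t-1,a}(m_{t-1}-\hmu_{t-1,a})^2\bigr]\bigr)$, and on a further high-probability event (arm $a$ not over-estimated, and no arm grossly over-estimated, both controlled as in (II)) this ratio, hence the expected per-epoch count of $a$-pulls, is at most geometric in $s$. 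Summing over $s$ then yields a contribution of order $\tfrac{\sigma^2}{\varepsilon''^2}\ln\tfrac{\sigma^2}{\varepsilon''^2}$, the extra logarithm coming from the coarse control of how long an epoch can last before the optimal arm's estimate self-corrects; this is $O\bigl(\tfrac{(1+c)^2\sigma^2}{c^2\Delta_a}\ln\tfrac{(1+c)^2\sigma^2}{c^2\Delta_a^2}\bigr)$ in regret, and the $\Delta_a\vee(\cdot)$ in the statement absorbs the regime in which this quantity is dominated by $\Delta_a$. Assembling (I)--(III) and summing over all suboptimal arms gives the theorem. The genuinely delicate point --- the one I expect to consume most of the work --- is making the $p_{t,a}$-versus-$p_{t,1}$ comparison rigorous, since when some arm has very few samples and an inflated empirical mean both the gap of $\bar\mu_{s,1}$ to $m_{t-1}$ and $m_{t-1}$ itself can be large; one must show such configurations are self-limiting (the offending arm is then pulled and its estimate contracts) without paying a spurious factor of $K$.
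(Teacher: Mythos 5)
Your skeleton matches the paper's proof: the same regret decomposition, the same budget $u$, and the same three-way case split (gap already large / arm $a$ over-estimated / best arm under-estimated), with the normalizer-$\ge 1$ bound for the first case and the one-visit-per-level sub-Gaussian sum for the second; those two parts and the leading-term bookkeeping are correct and essentially identical to the paper. The whole difficulty of the theorem sits in your event (III), and there your plan has a genuine gap that you yourself flag but do not resolve. Because you define (III) as $\{\hmu_{t-1,1}<\mu_1-\eps''\}$ only, the empirical maximum $m_{t-1}=\max_j\hmu_{t-1,j}$ is unconstrained on this event, so your ratio $p_{t,a}/p_{t,1}=\exp\bigl(\fr{1}{2\sigma^2}[\,N_{t-1,1}(m_{t-1}-\hmu_{t-1,1})^2-N_{t-1,a}(m_{t-1}-\hmu_{t-1,a})^2\,]\bigr)$ is not controlled by arm 1's frozen statistics within an epoch; the auxiliary event you propose ("no arm grossly over-estimated", configurations being "self-limiting") is precisely the step that threatens a union bound over arms and a spurious factor of $K$, and no argument is given for it.

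The fix is not to prove that auxiliary claim but to change the case split: define the bad event as $\{\hmu_{t,\max}<\mu_1-\eps\}$, as the paper does. On that event two one-line observations close the comparison with no reference to any other arm: the numerator satisfies $\exp(-\fr{1}{2\sigma^2}N_{t,a}\hDelta_{t,a}^2)\le 1$, and $\hDelta_{t,1}=\hmu_{t,\max}-\hmu_{t,1}\le(\mu_1-\eps-\hmu_{t,1})_+$, so $\PP_t(I_{t+1}=a)\le\exp\bigl(\fr{1}{2\sigma^2}N_{t,1}(\mu_1-\eps-\hmu_{t,1})_+^2\bigr)\PP_t(I_{t+1}=1)$, a factor depending only on $(N_{t,1},\hmu_{t,1})$. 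The rounds you would thereby remove from (III) --- empirical max above $\mu_1-\eps$ but arm 1 below --- are already covered by your (I)/(II): if $a$ is not over-estimated and $\hmu_{t,\max}\ge\mu_1-\eps$ then $\hDelta_{t,a}\ge\gamma_a$, regardless of arm 1. After this redefinition your epoching over $N_{t,1}=k$ works as intended and (F3) collapses to $\EE\sum_{k\ge1}\one\{\mu_1-\hmu_{1,(k)}>\eps\}\exp\bigl(\fr{k}{2\sigma^2}(\mu_1-\hmu_{1,(k)}-\eps)^2\bigr)$; note, though, that bounding this expectation still requires a peeling over the levels of $\mu_1-\hmu_{1,(k)}-\eps$ (the exponential of the deviation against its sub-Gaussian tail), and it is this peeling --- not the length of the epochs --- that produces the $O\bigl(\fr{\sigma^2}{\eps^2}\ln(1+\fr{\sigma^2}{\eps^2})\bigr)$ lower-order term in the statement.
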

Theorem~\ref{thm:ms} implies that MS is sub-UCB (note $\ln(T\Delta_a^2) + \ln(\frac{1}{\Delta_a^2}) = \ln(T)$).
Setting $c = 1/o(\ln^{1/2}(T))$, we achieve the asymptotic optimality in the same way described above.
Finally, we show the minimax reget bound in the following corollary as a consequence of Theorem~\ref{thm:ms}.
Hereafter, all the proofs are in the appendix unless noted otherwise.
\begin{cor}
\label{thm1_cor}
MS satisfies that that, $\forall T\ge1$,
\begin{align*}
    \EE\Reg_T \le
    O(\sigma\sqrt{KT\ln(T)})~.
\end{align*}
\end{cor}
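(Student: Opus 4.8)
The plan is to run the standard reduction from a distribution-dependent bound to a minimax bound by thresholding the gaps, in the spirit of \citet[Theorem~7.2]{lattimore20bandit}. Fix $c$ to an absolute constant (say $c=1$) and decompose $\EE\Reg_T=\sum_{a:\Delta_a>0}\Delta_a\,\EE[N_{T,a}]$. Since the bound in Theorem~\ref{thm:ms} is already written as a sum over arms, the analysis behind it yields the per-arm estimate
\begin{align*}
\Delta_a\,\EE[N_{T,a}]\le \fr{2\sigma^2(1+c)^2\ln(T\Delta_a^2/\sigma^2)}{\Delta_a}+O\del{\Delta_a\vee\fr{(1+c)^2\sigma^2}{c^2\Delta_a}\ln\fr{(1+c)^2\sigma^2}{c^2\Delta_a^2}},
\end{align*}
which is what I would work with. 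Pick a threshold $\Delta>0$ and split the suboptimal arms into $\cS_-=\{a:0<\Delta_a\le\Delta\}$ and $\cS_+=\{a:\Delta_a>\Delta\}$.

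For the arms in $\cS_-$ I would not invoke Theorem~\ref{thm:ms} at all; using only $\sum_a\EE[N_{T,a}]=T$ gives $\sum_{a\in\cS_-}\Delta_a\,\EE[N_{T,a}]\le\Delta\sum_a\EE[N_{T,a}]=T\Delta$. For the arms in $\cS_+$ I would apply the per-arm estimate together with two elementary monotonicity facts: $x\mapsto\ln(Tx^2/\sigma^2)/x$ is decreasing for $x>\sigma e/\sqrt{T}$ (its derivative has the sign of $2-\ln(Tx^2/\sigma^2)$), and $x\mapsto\fr{\sigma^2}{x}\ln\fr{\sigma^2}{x^2}$ is decreasing for $x<\sigma e$ and nonpositive for $x\ge\sigma$. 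Hence, provided $\Delta\ge\sigma e/\sqrt{T}$, every summand over $\cS_+$ is bounded by its value at $x=\Delta$, so $\sum_{a\in\cS_+}\Delta_a\,\EE[N_{T,a}]=O\del{\fr{K\sigma^2\ln(T\Delta^2/\sigma^2)}{\Delta}}$, with the lower-order pieces contributing the same order after thresholding.

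It remains to optimize $\Delta$. Choosing $\Delta=\sigma\sqrt{K\ln(T)/T}$ makes the $\cS_-$ contribution equal to $T\Delta=\sigma\sqrt{KT\ln T}$ and makes $T\Delta^2/\sigma^2=K\ln T$, so the $\cS_+$ contribution is $O\del{\sigma\sqrt{KT}\cdot\fr{\ln(K\ln T)}{\sqrt{\ln T}}}=O(\sigma\sqrt{KT\ln T})$, using $\ln(K\ln T)=O(\ln T)$ when $K\le T$ (and when $K>T$ the claimed bound exceeds the trivial regret bound). This $\Delta$ also meets the constraint $\Delta\ge\sigma e/\sqrt{T}$ once $K\ln T\ge e^2$, which fails only for a bounded set of $(K,T)$ handled directly via Theorem~\ref{thm:ms}. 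Summing the two contributions yields $\EE\Reg_T=O(\sigma\sqrt{KT\ln T})$.

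The part I expect to require the most care is the bookkeeping rather than any individual estimate: confirming the monotonicity ranges so that replacing each $\Delta_a\in\cS_+$ by $\Delta$ is legitimate, checking that the lower-order $O(\Delta_a\vee\cdots)$ terms do sum to $O(\sigma\sqrt{KT\ln T})$ after thresholding, and cleaning up the boundary regimes in $K$ and $T$ where the prescribed $\Delta$ lies outside the admissible range. It is also worth noting that getting exactly $\ln T$ inside the final square root (rather than something like $\ln(KT)$) relies on the sharper $\ln(T\Delta_a^2/\sigma^2)$ dependence that Theorem~\ref{thm:ms} already provides over Theorem~\ref{thm:ms0}.
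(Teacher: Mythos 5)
Your proposal takes essentially the same route as the paper's own proof: decompose $\EE\Reg_T$ over arms, bound the arms with $\Delta_a\le\Delta$ trivially by $T\Delta$, apply the per-arm estimate behind Theorem~\ref{thm:ms} to the arms with $\Delta_a\ge\Delta$, and choose $\Delta=\Theta(\sigma\sqrt{K\ln(T)/T})$ to balance the two contributions. Your explicit attention to the monotonicity range of $x\mapsto\ln(Tx^2/\sigma^2)/x$ (valid only for $x\ge \sigma e/\sqrt{T}$) and to the boundary regimes in $K,T$ is in fact slightly more careful than the paper's write-up, which elides exactly this case analysis.
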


\subsection{Proof of Theorem~\ref{thm:ms}}

Let $T \ge K$.
Let $\blue{N_{t,a}}$ be the number of times arm $a$ has been pulled up to (and including) time step t.
Using $\EE \Reg_T = \sum_{a:\Delta>0} \Delta_a \EE[N_{t,a}]$, it suffices to bound $\EE[N_{T,a}]$.
Fix $a \in [K]$ such that $\Delta_a >0$ and define
\begin{align*}
  \blue{u} := \lt\lcl \fr{2\sigma^2(1+c)^2 \ln(\fr{T\Delta_a^2}{2\sigma^2})}{\Delta_a^2} \rt\rcl~.
\end{align*}
\kjold{Let's add $\vee 1$ to the above. }
Then,
\begin{align*}
  \EE[N_{T,a}] 
    &\le u + \EE\sbr{\sum_{t=u+1}^T \one\cbr{ I_t = a \text{ and } N_{t-1,a} > u}}
  \\&=   u + \EE\sbr{\sum_{t=u}^{T-1} \one\cbr{ I_{t+1} = a \text{ and } N_{t,a} > u}}~.
\end{align*}
Let $\blue{\hmu_{t,\max}} = \max_{a} \hmu_{t,a}$.
Let $\eps>0$ be a variable to be tuned later. 
Recall that $\blue{\Delta_a}= \mu_1-\mu_a$ and $\blue{\hDelta_{t,a}}=\hmu_{t,\max}-\hmu_{t,a}$.
We further split the second term as follows:
\begin{align*}
&\EE\sbr{\sum_{t=u}^{T-1} \one\cbr{ I_{t+1} = a \text{ and } N_{t,a} > u}}
\\&=\EE \sum_{t=u}^{T-1} \one\cbr[2]{I_{t+1} = a, N_{t,a} > u, \hmu_{t,\max} \ge \mu_1 - \eps,~ \hDelta_{t,a} > \fr{\Delta_a}{1+c}}
\\&+\EE \sum_{t=u}^{T-1} \one\cbr[2]{I_{t+1} = a, N_{t,a} > u, \hmu_{t,\max} \ge \mu_1 - \eps,~ \hDelta_{t,a} \le \fr{\Delta_a}{1+c}}
\\&+
\EE \sum_{t=u}^{T-1} \one\cbr[2]{I_{t+1} = a, N_{t,a} > u, \hmu_{t,\max} < \mu_1 - \eps}~.
\end{align*}
We call the three terms above (F1), (F2), and (F3) respectively.

The term (F1) is a desirable event where both the empirically best mean is close to $\mu_1$ and the gap estimation of arm $a$ is good enough.
Then,
\begin{align*}
  &\text{(F1)}
  \\&\le  \sum_{t=u}^{T-1} \PP\del{  A_{t+1} = a \mid N_{t,a} > u,~ \hDelta_{t,a} > \fr{\Delta_a}{1+c}} 
      \\&\qquad \times\PP\del{ N_{t,a} > u,~ \hDelta_{t,a} > \fr{\Delta_a}{1+c}} 
  \\&\le  \sum_{t=u}^{T-1} \exp\del{-\fr{1}{2\sigma^2}u\del{\fr{\Delta_a}{1+c}}^2} \cd 1
      \le \fr{2\sigma^2}{\Delta_a^2}~.
\end{align*}

For the term (F2), it is easy to see that
\begin{align*}
  \cbr{\hmu_{t,\max} \ge \mu_1 - \eps} \cap \cbr{ \hDelta_{t,a} \le \fr{1}{1+c} \Delta_a} \\\subseteq \cbr{\hmu_{t,a} - \mu_a \ge \fr{c}{1+c}\Delta_a - \eps}~.
\end{align*}
Let  $\blue{\tau_{a}(k) }$ be the time step $t$ such that the arm a was pulled at $t$ and the number of arm pulls of arm a becomes $k$ at the end of the time step $t$; i.e., $\tau_a(k) = \min\cbr{t\ge1: N_{t,a} = k}$. 
We use the shortcut $\blue{\hmu_{a,(k)}} := \hmu_{\tau_a(k),a}$ to denote the sample mean of arm $a$ after pulling it for the $k$-th time, . 
Thus,
\begin{align*}
    &\EE \sum_{t=u}^{T-1} \one\cbr{I_{t+1} = a,~ N_{t,a} > u,~ \hmu_{t,\max} \ge \mu_1 - \eps,~ \hDelta_{t,a} \le \fr{\Delta_a}{1+c}}
  \\&\le \sum_{t=u}^{T-1} \PP\del{I_{t+1} = a,~ \hmu_{t,a} - \mu_a \ge \fr{c\Delta_a}{1+c} - \eps}
  \\&\le \EE \sbr{\sum_{k=1}^{\infty} \sum_{t=\tau_1(k)}^{\tau_1(k+1)-1} \one \cbr{I_{t+1} = a} \cd \one \cbr{ \hmu_{t,a} - \mu_a \ge \fr{c\Delta_a}{1+c} - \eps}}
  \\&= \EE \sbr{\sum_{k=1}^{\infty}  \one \cbr{ \hmu_{a,(k)} - \mu_a \ge \fr{c\Delta_a}{1+c} - \eps} \sum_{t=\tau_1(k)}^{\tau_1(k+1)-1}\one \cbr{I_{t+1} = a} }
  \\&= \EE \sbr{\sum_{k=1}^{\infty}  \one \cbr{ \hmu_{a,(k)} - \mu_a \ge \fr{c\Delta_a}{1+c} - \eps}  }
  \\&=\sum_{k\ge1} \PP\del{ \hmu_{a,(k)} - \mu_a \ge \fr{c\Delta_a}{1+c} - \eps} \tag{$ u\ge k$}
  \\&\le \sum_{k\ge1} \exp\del{-\fr{k}{2\sigma^2}\del{\fr{c\Delta_a}{1+c} - \eps}^2}
   \\&= \fr{1}{\exp\del{\fr{1}{2\sigma^2}\del{\fr{c\Delta_a}{1+c} - \eps}^2} - 1}
      \le \fr{2\sigma^2}{\del{\fr{c\Delta_a}{1+c} - \eps}^2}~.
\end{align*}

The term (F3) deals with the case where the best empirical mean is much lower than the true best mean $\mu^*$.
This part of the proof is the most nontrivial part where we essentially show that that exists a sufficient amount of probability assigned to the true best arm so that eventually we can recover from the case where the empirical mean of the best arm is actually not empirically the best.
Let $\blue{\cF_t}$ be the $\sigma$-algebra generated by $(A_1,R_1,\ldots,A_{t},R_{t})$.
We define the shortcuts $\blue{\PP_t(\cE)} := \PP(\cE \mid \cF_t)$ and $\blue{\EE_t[X]} := \EE[X\mid \cF_t]$. 

We first relate the probability of choosing the arm $a$ with that of the true best arm.
If $\hmu_{t,\max} < \mu_1 - \eps$, then
\begin{align}\label{eq:relate}
  \begin{aligned}
      &\PP_t(I_{t+1}=a) 
    \\\quad&= \fr{\exp(-\fr{1}{2\sigma^2}N_{t,a} \cd \hDelta_{t,a}^2)}{\exp(-\fr{1}{2\sigma^2}N_{t,1} \cd \hDelta_{t,1}^2)} \PP_t(I_{t+1}=1)
    \\\quad&\le \fr{1}{\exp(-\fr{1}{2\sigma^2}N_{t,1} \cd (\mu_1-\eps-\hmu_{t,1})_+^2)} \PP_t(I_{t+1}=1)~.
  \end{aligned}
\end{align}
Let $\blue{\tau_{1}(k) }$ be the time step after which the number of arm pulls of just becomes $k$; i.e., $\tau_1(k) = \min\cbr{t\ge1: N_{t,1} = k}$. 
Let us use the shortcut $\blue{\hmu_{1,(k)}} := \hmu_{\tau_1(k),1}$. 
Define $B_t := \one\{\hmu_{t,\max} < \mu_1 - \eps\}$, which is at most $B_{t,1} = \one\{\hmu_{t,1} < \mu_1 - \eps\}$.
Let $A_{t+1,i} = \one\{I_{t+1}=i\}, \forall i\in[K]$.
With this, we can bound (F3) as follows:
\begin{align*}
  &\EE \sum_{t=u}^{T-1} \one\cbr{I_{t+1} = a,~ N_{t,a} > u,~ \hmu_{t,\max} < \mu_1 - \eps}
  \\&\le \EE\sbr[2]{ \sum_{t=u}^{T-1} \EE_t\sbr{B_tA_{t+1,a} }  }
  \\&=   \EE\sbr[2]{ \sum_{t=u}^{T-1} B_t \EE_t\sbr{A_{t+1,a} }  } 
  \\&\le \EE\sbr[2]{ \sum_{t=u}^{T-1} B_t \exp\del{\fr{N_{t,1}  (\mu_1-\eps-\hmu_{t,1})^2}{2\sigma^2}} \EE_t\sbr{ A_{t+1,1} } } \tag{by \eqref{eq:relate}}
  \\&=   \EE\sbr[2]{ \sum_{t=u}^{T-1} \EE_t \sbr[2]{B_t \exp\del{\fr{N_{t,1}  (\mu_1-\eps-\hmu_{t,1})^2}{2\sigma^2}} A_{t+1,1}  }}
  \\&=   \EE\sbr[2]{ \sum_{t=u}^{T-1} B_t \exp\del{\fr{N_{t,1}  (\mu_1-\eps-\hmu_{t,1})^2}{2\sigma^2}} \cd A_{t+1,1}  }
  \\&\le   \EE\sbr[2]{ \sum_{t=u}^{T-1} B_{t,1}\exp\del{\fr{N_{t,1}  (\mu_1-\eps-\hmu_{t,1})^2}{2\sigma^2}} \cd A_{t+1,1}  }
  \\&=   \EE\Big[ \sum_{k=1}^{\infty} \sum_{t=\tau_1(k)+1}^{\tau_1(k+1)} B_{t,1}\exp\del{\fr{N_{t,1}  (\mu_1-\eps-\hmu_{1,(k)})^2}{2\sigma^2}} 
      A_{t+1,1} \Big].
\end{align*}
Using the trivial fact that $\sum_{t=\tau_1(k)+1}^{\tau_1(k+1)} A_{t+1,1} = 1$ and with a new notation $\barX_k := \mu_1 - \hmu_{1,(k)}$, we have that
\begin{align*}
  \text{(F3)} \le \EE\sbr{ \sum_{k=1}^{\infty}  \one\cbr{ \blue{\bar X_k} > \eps}\cd \exp\del{\fr{1}{2\sigma^2} k \cd (\blue{\bar X_k} - \eps)^2} } ~.
\end{align*}
The evaluate this integral, we perform peeling to upper bound the above by
\begin{align*}
  &\EE\Big[ \sum_{k=1}^{\infty}  \sum_{q=1}^\infty \one\cbr{ \mu_1 - \hmu_{1,(k)} \in \lt\lparen \eps+\fr{q-1}{2}\eps,~ \eps + \fr{q}{2} \eps \rt\rbrack  } 
        \\&\qquad \times \exp\del{\fr{1}{2\sigma^2} k \cd \del{\fr{q\eps}{2}}^2} \Big]
  \\&=  \sum_{k=1}^{\infty}  \sum_{q=1}^\infty \PP\del{ \mu_1 - \hmu_{1,(k)} \in \lt\lparen \eps+\fr{q-1}{2}\eps,~ \eps + \fr{q}{2} \eps \rt\rbrack  } 
        \\&\qquad\times \exp\del{\fr{1}{2\sigma^2} k \cd \del{\fr{q\eps}{2}}^2} 
  \\&\le  \sum_{k=1}^{\infty}  \sum_{q=1}^\infty \exp\del{-\fr{1}{2\sigma^2}k \del{\fr{q+1}{2} \eps}^2} \cd \exp\del{\fr{1}{2\sigma^2} k \cd \del{\fr{q\eps}{2}}^2}
  \\&= \sum_{k=1}^{\infty}  \sum_{q=1}^\infty \exp\del{-\fr{1}{4\sigma^2}k\eps^2(q + \fr12)}
  \\&= \sum_{q=1}^\infty   \fr{1}{\exp( \fr{1}{4\sigma^2}(q+\fr12 )\eps^2) - 1}
  \\&\sr{(a)}{\le} \fr{4\sigma^2}{\eps^2} \sum_{q=1}^{\bar q - 1}  \frac{1}{q+\fr12} + \sum_{q=\bar q}^{\infty} \fr{1}{\exp( \fr{1}{4\sigma^2}(q+\fr12 )\eps^2) - 1}
\end{align*}
where $(a)$ introduces a free variable $\bar q \in \{1,2,\ldots\}$ and uses $e^z \ge z + 1$.
Note that as long as $\bar q = \Theta(1)$, this is tight because $X\in(0,1)$ then $1+X \le e^X \le (e-1)X + 1$. 
It remains to bound the above and tune $\barq$.
Leaving the algebra to the appendix, we obtain
\begin{align*}
    \text{(F3)} = O\del{\frac{\sig^2}{\eps^2}(1 + \ln(1 + \frac{\sig^2}{\eps^2}))}~.
\end{align*}
Altogether, choosing $\eps = \fr{c\Delta_a}{2(1+c)}$, we have that
\begin{align*}
 \EE[N_{T,a}] 
 &\le u + \text{(F1)} + \text{(F2)} + \text{(F3)}
 \\&\le u + O\del{\added{\frac{(1+c)^2\sig^2}{c^2\Delta_a^2}(1 + \ln(1 + \frac{(1+c)^2 \sig^2}{c^2\Delta_a^2}))}}
 ~,
\end{align*}
which concludes the proof.

\section{MAILLARD SAMPLING PLUS (\MSP)}
\label{sec:msp}

\begin{algorithm}[t]
\caption{Maillard Sampling$^+$ (MS$^+$)}
\label{alg:ims}
\begin{algorithmic}
   \STATE \textbf{Input:} $B\ge 1, 0\le C\le1, 0<D\le1, \sigma^2>0$
   \FOR{$t=1,2,...$}
   \IF{$t\le K$}
   \STATE Pull the arm $I_t=t$ and observe reward $y_t$.
   \ELSE
   \STATE For all $a\in[K]$ with $\hmu_{t-1,a} = \max_b \hmu_{t-1,b}$,  
          $$\hat p_{t,a} = B \cd (1 + C\ln(1 + \ln(\frac{t}{N_{t-1,a}})))~.$$
   \STATE For other arms,
          \begin{align*}
          \hat p_{t,a} = \exp&(-\fr{1}{2\sigma^2}N_{t-1,a}\hDelta^2_{a,t-1}
          \\&+\ln(1+\fr{D}{2\sigma^2} N_{t-1,a}\hDelta^2_{a,t-1}))~.
          \end{align*}
   \STATE Compute $p_{t,a} = \frac{\hat p_{t,a}}{\sum_b \hat p_{t,b}}$.
   \STATE Pull the arm $I_t \sim p_t$ and observe reward $y_t$.
   \ENDIF
   \ENDFOR
\end{algorithmic}
\end{algorithm}

We now propose an improved version of MS described in Algorithm~\ref{alg:ims}, which we call \MSP.
\MSP differs from MS in two aspects.
First, the probability of being sampled is inflated by a small amount, and the empirical best arms are also inflated but in a slightly different way.
This helps achieving the minimax ratio of $\sqrt{\log(K)}$, which is better than $\sqrt{\log(T)}$ of MS.
Second, we have introduced a number of parameters $B$, $C$, and $D$.
We remark that while the parameters $C$ and $D$ are less important in practice, which we set to $C=D=0.01$ in our experiments later, the parameter $B$, which we call the `booster', affects the performance a lot. 
We recommend $B=4$ to be safe, but depending on the situation one may want to set it to $B=8$ or even a higher value; see our experiments in Section~\ref{sec:expr} for more discussion.

\MSP enjoys the following regret bound.
\begin{thm}
    \label{thm_IMS}
    \MSP (Algorithm~\ref{alg:ims}) satisfies that, for every $T\ge1$ and $c>0$,
    \begin{align*}
      \EE \Reg_T
     \le &\sum_{a:\Delta_a>0} \fr{2\sigma^2(1+c)^2 \ln(\fr{T\Delta_a^2}{2\sigma^2}(1+\ln(1+\fr{T\Delta_a^2}{2\sigma^2})))}{\Delta_a} \\&+ O\del{ B\fr{\sigma^2(1+c)^2\ln(1+\ln(\fr{T\Delta_a^2}{2\sigma^2}))}{c^2\Delta_a} + \Delta_a}
    \end{align*}
    where we omit the dependence on $C$ and $D$ for brevity.
\end{thm}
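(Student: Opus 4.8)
The plan is to follow the proof of Theorem~\ref{thm:ms} almost verbatim, accounting for the two changes in \MSP: each non-best arm's weight is multiplied by the polynomial factor $1+\fr{D}{2\sig^2}N_{t-1,a}\hDelta^2_{a,t-1}$, and every empirically best arm carries weight $\hat p_{t,a}=B\del{1+C\ln(1+\ln(t/N_{t-1,a}))}\ge B>1$. As in that proof it suffices to bound $\EE[N_{T,a}]$ for each $a$ with $\Delta_a>0$, since $\EE\Reg_T=\sum_{a:\Delta_a>0}\Delta_a\EE[N_{T,a}]$. I would take the threshold $u$ to be the one from Theorem~\ref{thm:ms} but with the logarithm enlarged to $\ln\del{\fr{T\Delta_a^2}{2\sig^2}\del{1+\ln\del{1+\fr{T\Delta_a^2}{2\sig^2}}}}$ (so that $\Delta_a u$ is exactly the stated main term, the ceiling producing the stray $+\Delta_a$), then split $\EE[N_{T,a}]\le u+(\mathrm{F1})+(\mathrm{F2})+(\mathrm{F3})$ over the same three events as before (good gap estimate for $a$; overly optimistic $\hmu_{t,a}$; pessimistic empirical best) — except that, unlike in Theorem~\ref{thm:ms}, I would \emph{keep} the restriction $N_{t,a}>u$ inside $(\mathrm{F3})$.

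The terms $(\mathrm{F1})$ and $(\mathrm{F2})$ go through with minor changes. On the $(\mathrm{F1})$ event $\hDelta_{t,a}>\Delta_a/(1+c)>0$, so arm $a$ is not empirically best and $\hat p_{t,a}=(1+Dz)e^{-z}$ with $z=\fr{1}{2\sig^2}N_{t,a}\hDelta_{t,a}^2$; since $z\mapsto(1+Dz)e^{-z}$ is decreasing on $[0,\infty)$ whenever $0<D\le1$, and $\sum_b\hat p_{t,b}\ge B$ because some empirically best arm always contributes at least $B$, one gets $\PP_t(I_{t+1}=a)\le\fr1B(1+Dz_0)e^{-z_0}$ with $z_0=\fr{u\Delta_a^2}{2\sig^2(1+c)^2}$, and the chosen $u$ makes $\sum_{t\le T}$ of this $O(\sig^2/\Delta_a^2)$, exactly as in MS. The $(\mathrm{F2})$ bound is literally the MS one, since it only invokes sub-Gaussian concentration of $\hmu_{t,a}$ together with $\PP(I_{t+1}=a,\cd)\le\PP(\cd)$ and the pull-count bijection, and never looks at the sampling rule; with $\eps$ chosen later it is $O\del{\sig^2/(\tfr{c\Delta_a}{1+c}-\eps)^2}$.

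The heart of the proof is $(\mathrm{F3})$. On the event $\hmu_{t,\max}<\mu_1-\eps$ we have $\hDelta_{t,1}\le(\mu_1-\eps-\hmu_{t,1})_+$, so writing $\bar z=\fr{N_{t,1}(\mu_1-\eps-\hmu_{t,1})_+^2}{2\sig^2}$: if arm $1$ is not empirically best then $\hat p_{t,1}=(1+Dz')e^{-z'}$ with $z'=\fr{1}{2\sig^2}N_{t,1}\hDelta_{t,1}^2\le\bar z$, hence $\hat p_{t,1}\ge(1+D\bar z)e^{-\bar z}$ by the same monotonicity, while if arm $1$ is empirically best then $\hat p_{t,1}\ge B>1\ge(1+D\bar z)e^{-\bar z}$ — either way $\hat p_{t,1}\ge(1+D\bar z)e^{-\bar z}$. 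For the numerator, uniformly over both formulas $\hat p_{t,a}\le B\del{1+C\ln(1+\ln(t/N_{t,a}))}$, and here the retained constraint $N_{t,a}>u$ turns this into $B\del{1+C\ln(1+\ln(T/u))}=O\del{B\ln(1+\ln(\tfr{T\Delta_a^2}{2\sig^2}))}$, since $T/u\asymp\fr{T\Delta_a^2}{\sig^2(1+c)^2\ln(\cd)}$. Relating $\PP_t(I_{t+1}=a)=\fr{\hat p_{t,a}}{\hat p_{t,1}}\PP_t(I_{t+1}=1)$, pulling this deterministic prefactor out of the sum, and telescoping over the pulls of arm $1$ exactly as in the MS proof reduces $(\mathrm{F3})$ to that prefactor times $\EE\sum_{k\ge1}\one\cbr{\barX_k>\eps}\fr{\exp\del{k(\barX_k-\eps)^2/(2\sig^2)}}{1+Dk(\barX_k-\eps)^2/(2\sig^2)}$ with $\barX_k=\mu_1-\hmu_{1,(k)}$. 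Peeling $\barX_k-\eps$ in windows of width $\eps/2$ and retaining the polynomial \emph{denominator} (lower-bounded by $1+Dk(q-1)^2\eps^2/(8\sig^2)$ on the $q$-th window, with $q=1$ treated separately) makes both the inner sum over $k$ and the outer sum over $q$ converge, so this double sum is $O_D(\sig^2/\eps^2)$ — crucially \emph{without} the $\ln(1+\sig^2/\eps^2)$ factor that the cruder bound $1+x\le e^x$ reintroduces and that burdens MS's $(\mathrm{F3})$. Taking $\eps=\fr{c\Delta_a}{2(1+c)}$ gives $\sig^2/\eps^2=\Theta\del{\tfr{(1+c)^2\sig^2}{c^2\Delta_a^2}}$ and hence $(\mathrm{F3})=O_D\del{B\ln(1+\ln(\tfr{T\Delta_a^2}{2\sig^2}))\cd\tfr{(1+c)^2\sig^2}{c^2\Delta_a^2}}$.

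Combining, $\Delta_a\EE[N_{T,a}]\le\Delta_a u+\Delta_a\del{(\mathrm{F1})+(\mathrm{F2})+(\mathrm{F3})}$: $\Delta_a u$ is the stated main term plus the $+\Delta_a$ from the ceiling, and the $(\mathrm{F3})$ contribution dominates $(\mathrm{F1})$ and $(\mathrm{F2})$, yielding the stated lower-order term $O_{C,D}\del{B\ln(1+\ln(\tfr{T\Delta_a^2}{2\sig^2}))\tfr{(1+c)^2\sig^2}{c^2\Delta_a}}$; summing over $a:\Delta_a>0$, with the degenerate regime $T\Delta_a^2\lsim\sig^2$ handled by the trivial bound $\EE[N_{T,a}]\le T$, gives Theorem~\ref{thm_IMS}. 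I expect the main obstacle to be the $(\mathrm{F3})$ step: (i) making the uniform bound on the booster/best-arm weight cooperate with the surviving $N_{t,a}>u$ constraint so the $\ln\ln$ factor ends up at the scale $\tfr{T\Delta_a^2}{2\sig^2}$ rather than $T$, and (ii) carrying out the peeling while keeping the polynomial denominator on \emph{both} sums — this is exactly the step that buys \MSP's improvement over MS, and the naive $1+x\le e^x$ simplification silently gives it back.
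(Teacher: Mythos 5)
Your proposal is correct and follows essentially the same route as the paper's proof: the same threshold $u$, the same (F1)/(F2)/(F3) decomposition with $N_{t,a}\ge u$ retained in (F3) to turn the booster factor into $B\,(1+C\ln(1+\ln(T/u)))$, the same monotonicity argument for $(1+Dz)e^{-z}$ on both sides of the probability-ratio step, and the same peeling with the polynomial denominator kept so that (F3) becomes $O_D(\sigma^2/\eps^2)$ rather than $O(\tfrac{\sigma^2}{\eps^2}\ln(1+\tfrac{\sigma^2}{\eps^2}))$. The only difference is presentational: where you assert that both sums converge, the paper makes this concrete by splitting the inner sum over $k$ into the ranges $k<Q/q^2$, $k\in[Q/q^2,Q/q]$, and $k>Q/q$ (with $Q=8\sigma^2/(D\eps^2)$), which is exactly the bookkeeping needed so the harmonic piece contributes only $\ln q$ (absorbed by $1/q^2$) instead of $\ln(\sigma^2/\eps^2)$.
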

\begin{proof}
  The key difference from the proof of MS is the term (F3) therein that resulted in $O(\fr{\sig^2}{\Delta^2} \ln(\frac{\sig^2}{\Delta^2}))$.
  Inspecting how we obtain the minimax regret bound, one can see that we would obtain the minimax ratio of $\sqrt{\log(K)}$ if we could bound (F3) by $O(\fr{\sig^2}{\Delta^2})$.
  Indeed, this was the driving force behind the design of \MSP.
  The inflated sampling probability helps us to obtain such a bound.
  We refer to our appendix for the full proof.
\end{proof}
Just like MS, \MSP is sub-UCB, one can obtain an asymptotic optimality by setting $c = 1/o(\ln^{1/2}(T))$.
One can see that the dependence on $B$ is on the lower order term only.
This means that however large $B$ is, for large enough $T$, the leading term will dominate the regret.

The following corollary shows that we can obtain a minimax bound of $\sqrt{KT\log(K)}$, which is an improvement of MS.
\begin{cor}
\label{thm2_cor}
\MSP with absolute constants $B$, $C$, and $D$ satisfies that, $\forall T\ge1$,
\begin{align*}
    \EE \Reg_T\le
    O(\sigma\sqrt{KT\ln(K)})~.
\end{align*}
\end{cor}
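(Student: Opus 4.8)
The plan is to plug the instance-dependent bound of Theorem~\ref{thm_IMS} into the standard threshold argument that converts such bounds into minimax ones (cf.\ \citet[Theorem 7.2]{lattimore20bandit}). First I would set $c=1$ and recall that $B,C,D$ are absolute constants; after absorbing the nested logarithms via $\ln\big(x(1+\ln(1+x))\big)=O(\ln(e+x))$ and noting that the $\ln\ln$ term is of strictly lower order, Theorem~\ref{thm_IMS} yields absolute constants $c_1,c_2>0$ such that, writing $\EE\Reg_T=\sum_{a:\Delta_a>0}\Delta_a\EE[N_{T,a}]$,
\begin{align*}
  \Delta_a\EE[N_{T,a}] \le \frac{c_1\sigma^2}{\Delta_a}\ln\Big(e+\frac{T\Delta_a^2}{\sigma^2}\Big) + c_2\Delta_a
  \qquad\text{for every }a\text{ with }\Delta_a>0 .
\end{align*}
I would also keep the trivial bound $\Delta_a\EE[N_{T,a}]\le \Delta_a T$ on hand.

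Next I would choose the balancing threshold $\Delta := \sigma\sqrt{K\ln(eK)/T}$, enlarging it by an absolute constant if necessary so that it lies inside the region where $x\mapsto \tfrac1x\ln(e+Tx^2/\sigma^2)$ is nonincreasing (this holds as soon as $\ln(e+T\Delta^2/\sigma^2)\ge 2$, which is automatic for $K\ge 3$ and needs only a constant-factor bump for $K=2$). I then split the arms at $\Delta$. For $\{a:\Delta_a\le\Delta\}$ the trivial bound gives $\sum \Delta_a\EE[N_{T,a}]\le \Delta\sum_a \EE[N_{T,a}] = \Delta T = \sigma\sqrt{KT\ln(eK)}$. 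For $\{a:\Delta_a>\Delta\}$ I apply the gap-dependent bound together with the monotonicity just noted: the leading term of each such arm is at most $\tfrac{c_1\sigma^2}{\Delta}\ln(e+T\Delta^2/\sigma^2)$, so these arms contribute at most $K\cdot\tfrac{c_1\sigma^2}{\Delta}\ln(e+T\Delta^2/\sigma^2)$ through their leading terms.

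Finally I would substitute $\Delta$ and untangle the self-referential logarithm: $T\Delta^2/\sigma^2=K\ln(eK)$, so $\ln(e+T\Delta^2/\sigma^2)=\ln(e+K\ln(eK))=O(\ln K)$ for $K\ge2$, and hence the second group contributes
\[
  K\cdot\frac{c_1\sigma^2}{\Delta}\ln\Big(e+\frac{T\Delta^2}{\sigma^2}\Big)
  = O\Big(\sigma\sqrt{KT}\cdot\frac{\ln K}{\sqrt{\ln(eK)}}\Big)
  = O(\sigma\sqrt{KT\ln K}).
\]
Combining with the first group gives $\EE\Reg_T=O(\sigma\sqrt{KT\ln K})$, modulo the residual additive terms $\sum_{a:\Delta_a>\Delta}c_2\Delta_a$ produced by the ceiling in the exploration length $u$; these form the same benign $\sum_a\Delta_a$ contribution that appears in the minimax analysis of UCB and are of lower order on the hard instances (in particular $\le c_2 K = O(\sqrt{KT})$ when $T\ge K$ and the rewards lie in a bounded range, which is the regime of the $\Theta(\sqrt{KT})$ lower bound of~\citet{audibert09minimax}). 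The step I expect to be delicate is arithmetic rather than conceptual: fixing the constant in $\Delta$ so that the monotonicity region is entered for all $K\ge2$, and checking that collapsing the nested logarithms of Theorem~\ref{thm_IMS} into $\ln(e+T\Delta_a^2/\sigma^2)$ costs only a constant factor — a careless choice of $\Delta$ at the substitution step is exactly what would turn the self-referential-log estimate into a blow-up.
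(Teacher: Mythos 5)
Your proof is correct and takes essentially the same route as the paper's: split the arms at a threshold $\Delta = \Theta\bigl(\sigma\sqrt{K\ln K/T}\bigr)$, bound the small-gap arms by $T\Delta$, and bound the large-gap arms by $K$ times the gap-dependent bound of Theorem~\ref{thm_IMS} evaluated at $\Delta$, with $B,C,D$ absolute constants. Your added care about the monotonicity region of $x \mapsto x^{-1}\ln(e + Tx^2/\sigma^2)$ and about absorbing the nested logarithms merely makes explicit two steps the paper glosses over, so no substantive difference remains.
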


We remark that \MSP enjoys the same minimax bound with $C=0$, and the parameter $C$ can be viewed as another way to encourage exploitation that goes away with a large enough $t$, which is has a milder effect compared to $B$.

\section{EXPERIMENTS}
\label{sec:expr}

In this section, we numerically investigate the effects of the parameter $B$ of MSP and evaluate the performance of MSP with existing bandit algorithms for sub-Gaussian rewards.
Throughout, we set $C=D=0.01$ for MSP, which seems to work well overall. 

\textbf{The effect of the booster parameter $B$.}
We consider a Gaussian arm set with variance 1.
There are 10 arms with mean rewards that decrease linearly: $\{0.9,0.8,\ldots,0.0\}$.
We run MS$^+$ 200 times with $B \in \{2,4,8,\ldots,128\}$.
Figure~\ref{fig:expr-tuning} plots the average regret and its standard deviation.
The average regret goes down up to $B=16$ and then starts to increase.
The variance starts to increase from $B=8$ and then soon increases dramatically.
This shows the tradeoff between the expected regret and the variance of the regret.
Note, however, that even with a very aggressive booster $\log_2(B)=7$, the average regret is not dramatically higher than the best and certainly far from suffering a linear regret.
This is not surprising because MSP still enjoys a valid regret bound; in fact, even with $B=128$, MSP is asymptotically optimal. 
Such aggressive exploitation of the empirical best arm without losing theoretical guarantee (and without linear regret) is the unique characteristic of MSP that is not found in other existing algorithms.

\textbf{Methods.}
Since there are so many bandit algorithms for various reward settings, we limit the set of baseline methods to those that have a regret guarantee for sub-Gaussian rewards.
For MSP, we use $B\in\{4,8,16\}$, which we denote by MS$^+$4, MS$^+$8, and MS$^+$16 respectively.
We consider the following baseline methods:
\begin{itemize}
  \item TS: Thompson sampling for Gaussian rewards~\cite{korda13thompson}. 
  This is the only baseline that does not have a regret guarantee for sub-Gaussian rewards, which we include as a comparison point.
  \item TS-SG: Gaussian Thompson sampling for bounded rewards by~\citet{agrawal17nearoptimal} that is (trivially) adapted for sub-Gaussian rewards. The theoretical guarantees for bounded rewards transfer to the sub-Gaussian rewards naturally. The key difference is that this algorithm inflates the posterior variance by a factor of $4$ for technical reasons.
  \item AOUCB: Asymptotically optimal UCB~\cite{lattimore20bandit}.
  \item UCB$^{+}$: Almost optimally confidence UCB  (a.k.a. UCB plus)~\cite{lattimore15optimally}. This is a fixed-budget algorithm (more comments below).
  \item OCUCB: Optimally confidence UCB~\cite{lattimore15optimally}, a fixed-budget algorithm. 
  \item MOSS: MOSS~\cite{audibert09minimax}, a fixed-budget algorithm.
\end{itemize}
Note that the comparison will not be entirely fair since the fixed-budget algorithms use the extra information of the time horizon $T$ to adjust the amount of exploration.
Note that it should be possible to adjust UCB$^{+}$ and MOSS to not require $T$ as input such as~\citet{menard17minimax}, but this will inflate the confidence width, likely increasing regret in practice.
On the other hand, it is not clear to us how to make OCUCB anytime.
All the algorithms were implemented so it takes in the sub-Gaussian parameter $\sigma^2=1/4$.

\textbf{Synthetic bandit problems.}
We test the algorithms above with the bandit problems summarized in Table~\ref{tab:problems}.
For Bernoulli problems, our test algorithms with the sub-Gaussian parameter of $\sigma^2= 1/4$, which is the tightest one without losing their regret guarantees.
The intention of running Bernoulli problems is not to claim that the tested algorithms will excel at them but to observe how they behave under non-Gaussian rewards.
\begin{table}
{\small\centering
  \begin{tabular}{cccc}\hline
    Noise & Gaps & $K$ & $\mu_{1:K}$ \\\hline
    Gaussian & \!Linear\! & \!$\in\cbr{10,100} $  & $\mu_i = 1 - \fr i K$\\
    Gaussian & \!Equal\! & \!$\in\cbr{2,10,100}$  & $\mu_1 = 1, \mu_{2:K} = .5$\\
    Bernoulli & \!Linear\! & \!$\in\cbr{10,100} $   & $\mu_i = 1 - \fr i K$\\
    Bernoulli & \!Equal\!  & \!$\in\cbr{2,10,100}$  & \!$\mu_1 = 0.1, \mu_{2:K} = .05$\\\hline
  \end{tabular}
  }
\caption{Synthetic bandit problems for evaluation.}
\label{tab:problems}
\end{table}

\textbf{Results.}
We run evaluation of the algorithms on every bandit problem in Table~\ref{tab:problems} with $T=20,000$ and 200 trials and report the resulting average regrets and its standard deviations in Figure~\ref{fig:expr-g} and~\ref{fig:expr-b}.
Note that the number of trials is high enough to make the average regret a good estimate of the expected regret except for a few cases with a very large variance.
For Gaussian environments, the overall winner is OCUCB, which is not surprising given that its regret bounds are much better than the rest and that it is a fixed-budget algorithm.
Among anytime algorithms, MS$^+$8, MS$^+$16, and TS are all comparable with respect to the average regret while MS$^+$8 and MS$^+$16 often come with a larger variance.
This is still an encouraging result against TS since TS is optimized for Gaussian rewards; we expect that optimizing MS$^+$ for Gaussian rewards may make MS$^+$ be comparable to TS in both expected regret and the variance.
For the remaining methods, MS$^+$4 has smaller variances than other MS$^+$'s, but its average regrets are mostly behind them, and both TS-SG and AOUCB are much worse than the rest.
We remark that while the variance of MS$^+$16 seems detrimental in Figure~\ref{fig:expr-g}(a), it is still far from suffering a linear regret; the maximum regret of MS$^+$16 over 200 trials was 450.5 while the largest possible regret of any algorithm is $20,000\cd 0.5 = 10,000$.

For Bernoulli instances, we still observe the same trend except for minor differences: (i) the relative performance of TS is not as good as Gaussian environments due to the non-Gaussian nature of the rewards, and (ii) MS$^+$16 consistently has the smallest average regret among anytime algorithms.
The second observation is due to the fact that the actual variances are small.
This confirms the benefit of MS$^+$.
When we have domain knowledge or experience from similar tasks that the variances tend to be small, we can tune the booster parameter to enjoy a better performance without worrying about suffering a linear regret when such information is not true.
In contrast, existing tuning techniques of UCB or TS~\cite{zhang16online} break their regret guarantees and risk suffering from a linear regret.

\begin{figure}
  \centering
  \begin{tabular}{cc}
    \includegraphics[width=.60\linewidth]{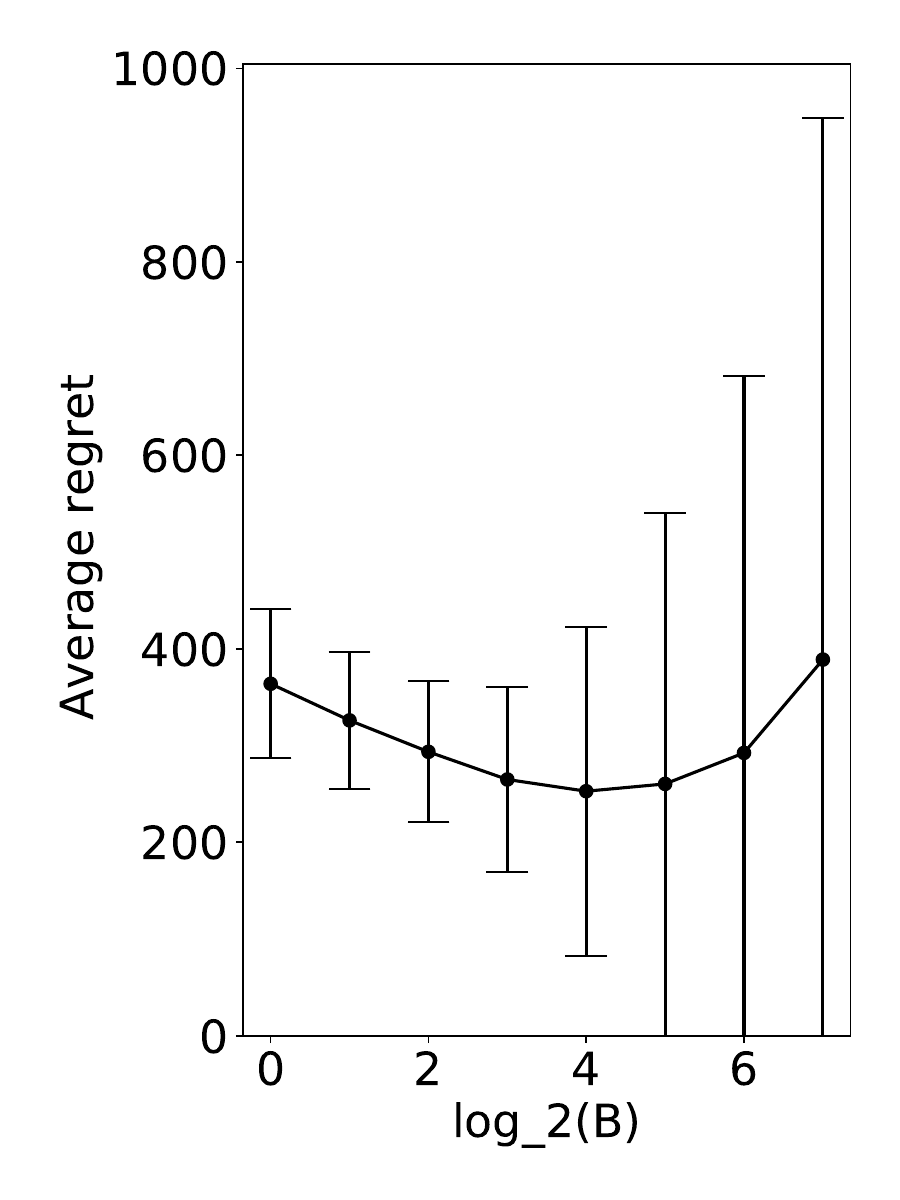}
  \end{tabular}
  \caption{The effect of tuning $B$ for MSP. The error bars are standard deviation over 200 trials.}
  \label{fig:expr-tuning}
\end{figure}

\begin{figure*}
  \centering
  \begin{tabular}{ccc}
    \includegraphics[width=.314\linewidth]{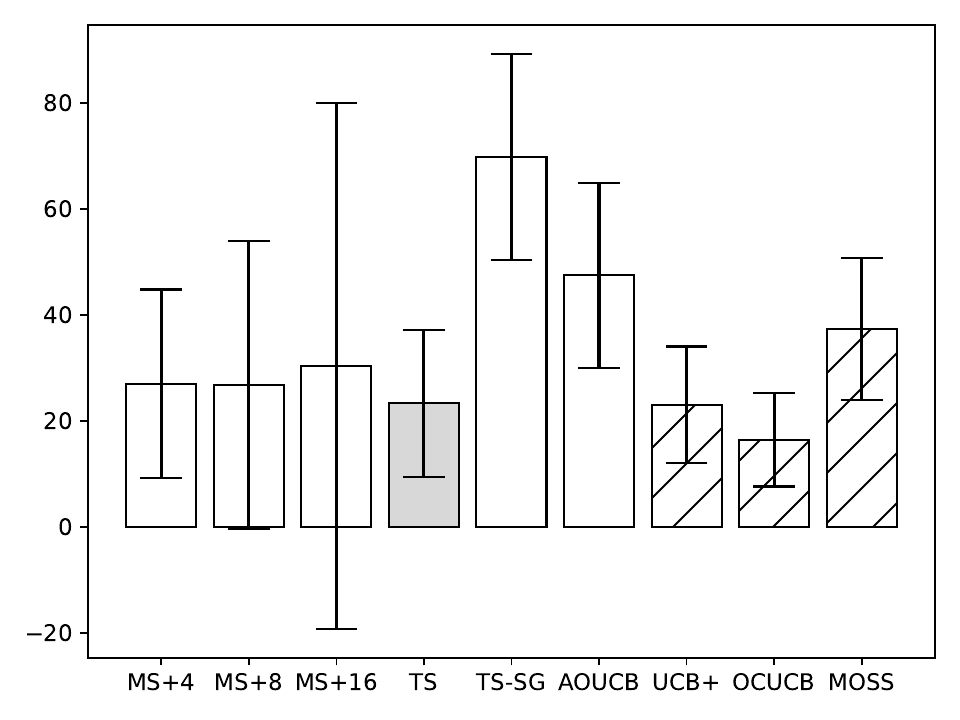} &
    \includegraphics[width=.314\linewidth]{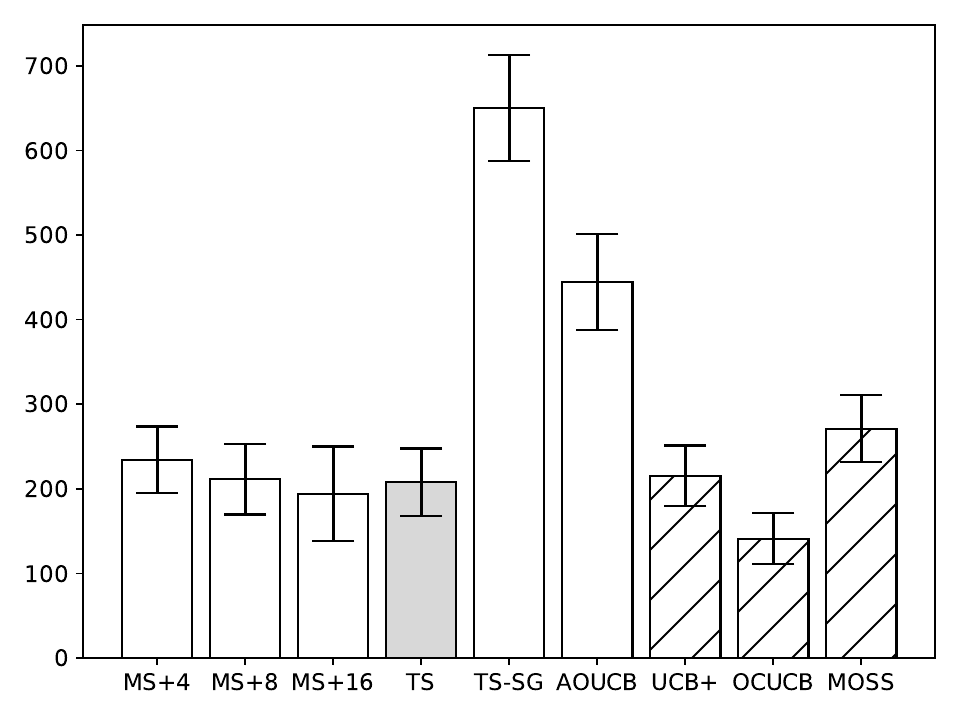} &
    \includegraphics[width=.314\linewidth]{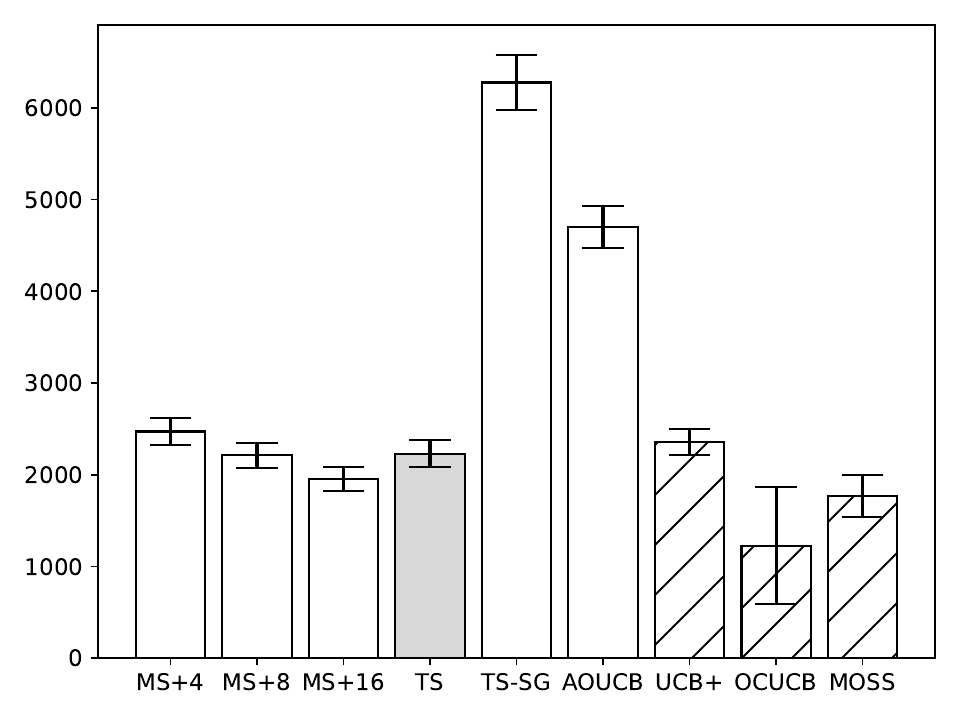} \\
    (a) Equal, $K=2$ & (b) Equal, $K=10$ & (c) Equal, $K=100$ \\
    \includegraphics[width=.314\linewidth]{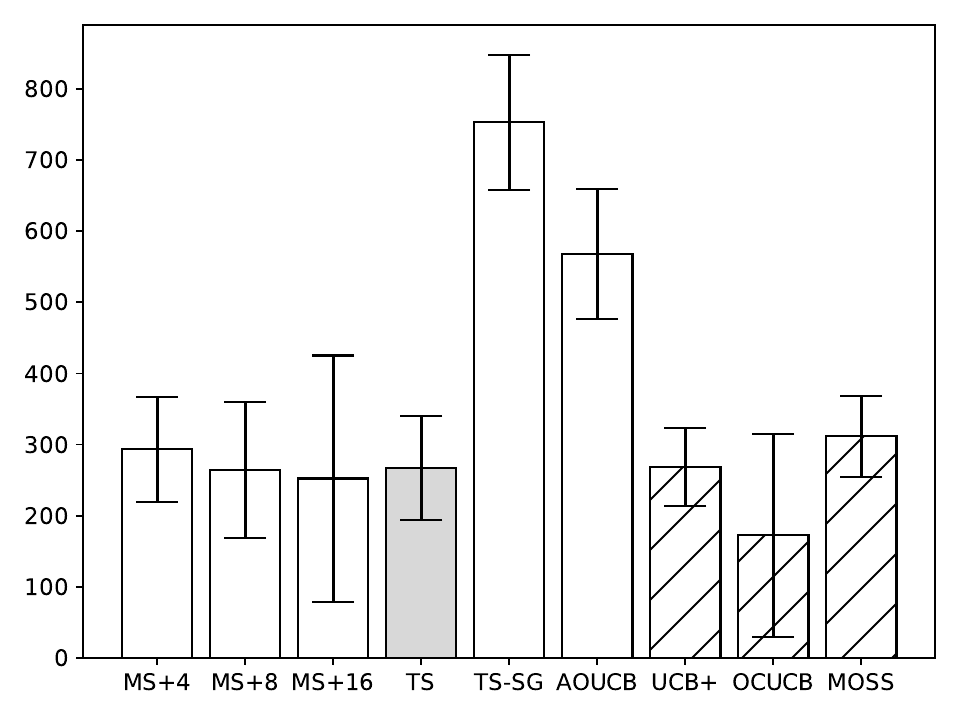}  &
    \includegraphics[width=.314\linewidth]{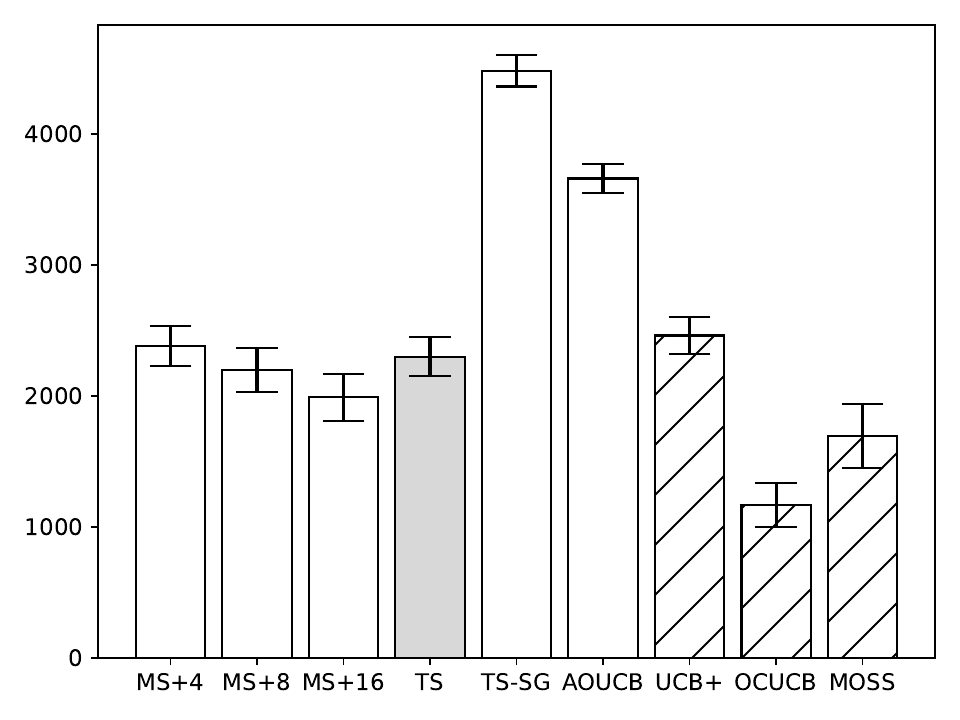} & \\
    (d) Linear, $K=10$ & (e) Linear $K=100$ \\
  \end{tabular}
  \caption{Experimental results for Gaussian environments from Table~\ref{tab:problems} with $T=20,000$ and $200$ trials. We show the average regrets as bars and the standard deviations with error bars. The shaded bar means that the method does not have a regret guarantee for sub-Gaussian rewards. The hatched bar means that the method is fixed-budget, which requires an extra information of $T$, making the comparison not exactly fair.}
  \label{fig:expr-g}
\end{figure*}
\begin{figure*}
  \centering
  \begin{tabular}{ccc}
    \includegraphics[width=.314\linewidth]{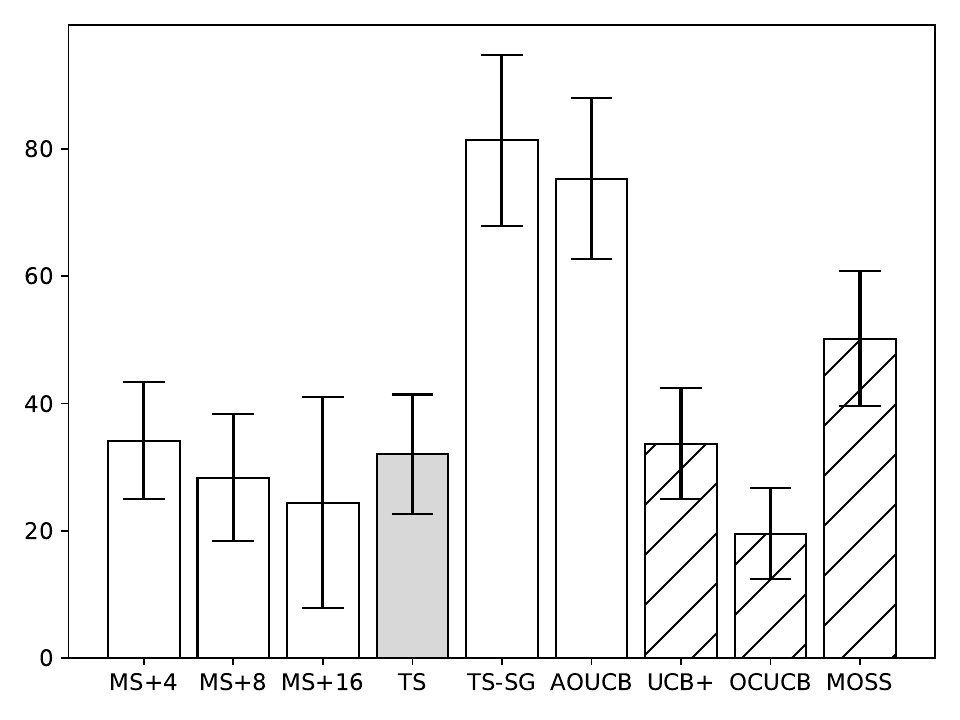} &
    \includegraphics[width=.314\linewidth]{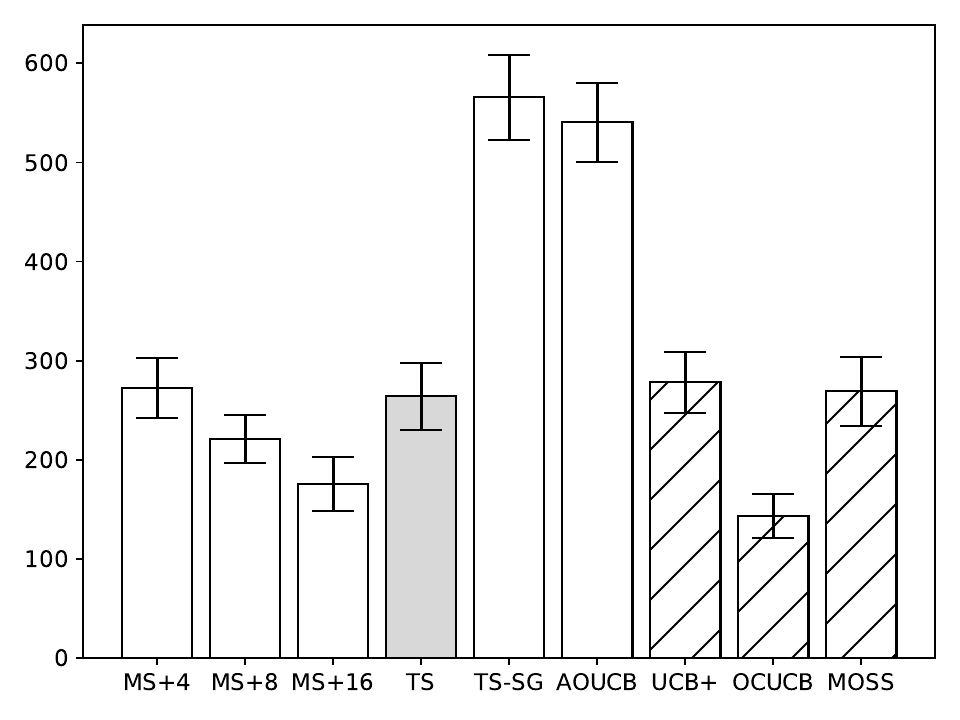} &
    \includegraphics[width=.314\linewidth]{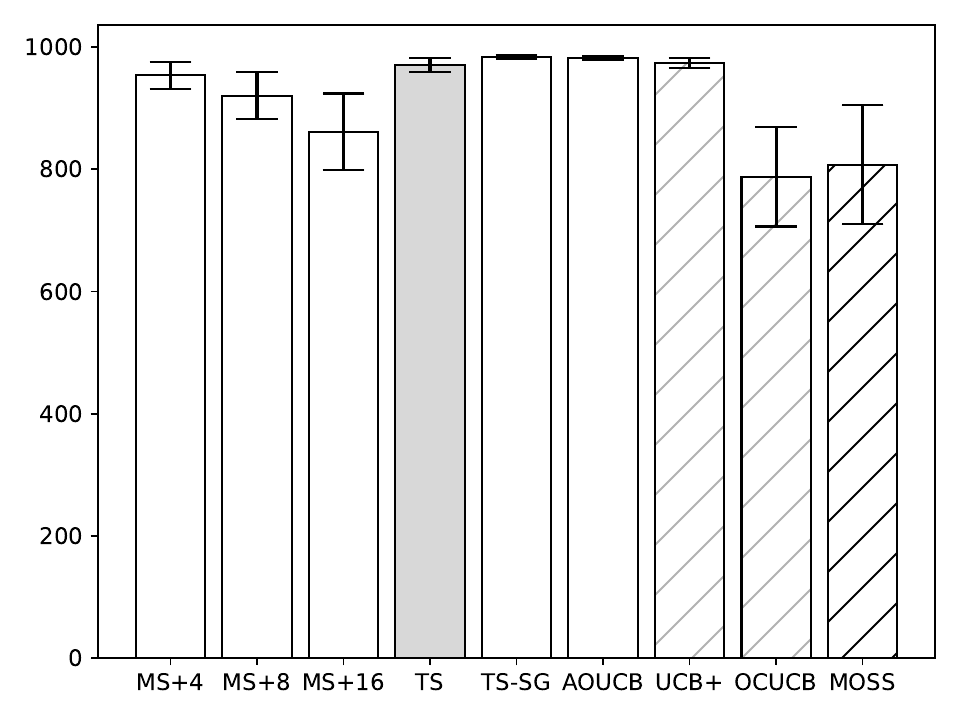} \\
    (a) Equal, $K=2$ & (b) Equal, $K=10$ & (c) Equal, $K=100$ \\
    \includegraphics[width=.314\linewidth]{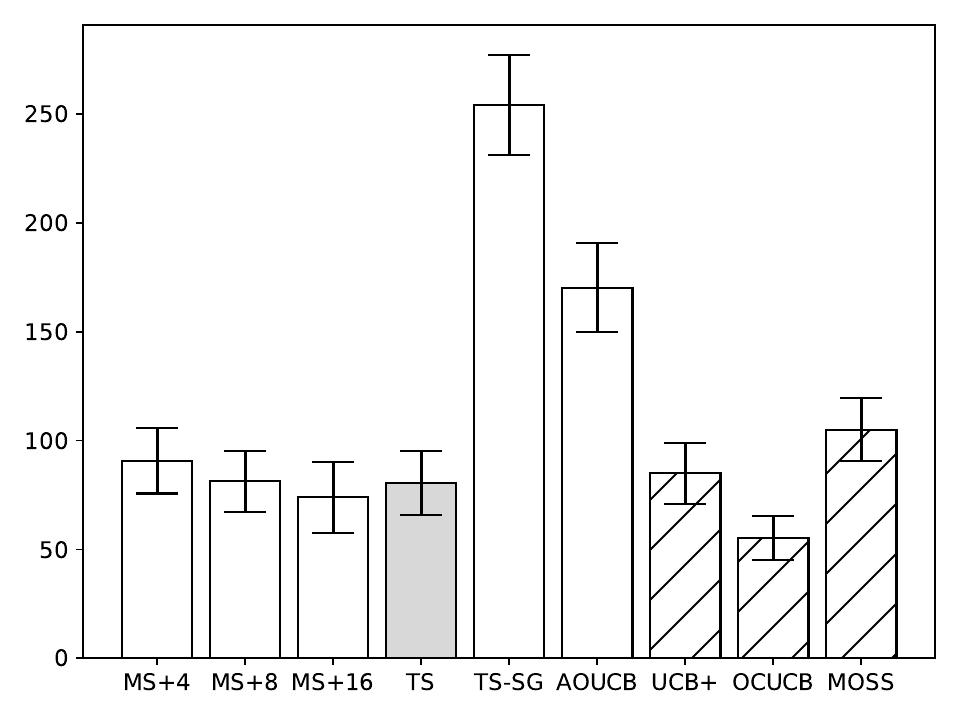}  &
    \includegraphics[width=.314\linewidth]{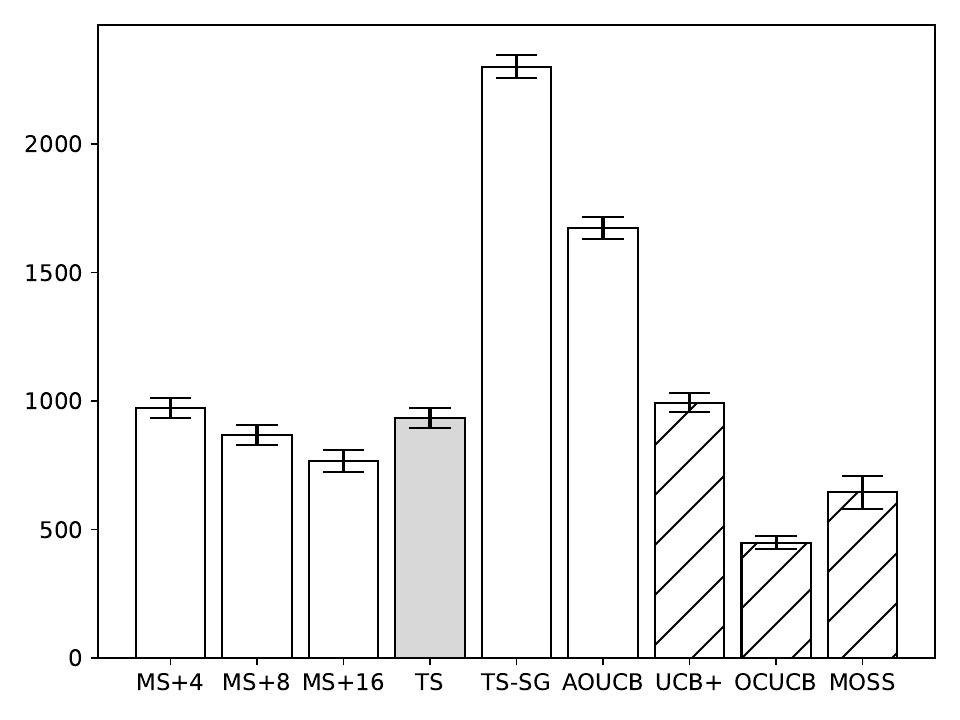} & \\
    (d) Linear, $K=10$ & (e) Linear $K=100$ \\
  \end{tabular}
  \caption{Experimental results for Bernoulli environments from Table~\ref{tab:problems}. }
  \label{fig:expr-b}
\end{figure*}

\section{CONCLUSION}
\label{sec:future}

We have revisited Maillard sampling, improved its regret bound, and proposed a variant that enjoys a better minimax regret bound.
As Boltzmann exploration is popular in reinforcement learning for its simplicity yet is provably suboptimal~\cite{cb17boltzmann},  we highly recommend that the practitioners try the following simplified version of MS as a replacement for Boltzmann exploration:
\begin{align}\label{eq:practical}
    p_{t,a} \propto B^{\one\{ a \in \arg\max_{b} \hat\mu_{t-1,b}\}} \exp(- \frac{\hat\Delta_{t-1,a}^2}{2\sigma^2} )
\end{align}
where both $B\ge 1$ is to be tuned and $\sigma^2$ can also be tuned when the sub-Gaussian parameter is unknown.
This version is simple enough to cause no friction for a quick implementation yet enjoys good guarantees in bandit problems.

Our work opens up numerous exciting research directions.
First, we can further investigate MS or even improve its mathematical properties.
For example, while \MSP allowed us to achieve the minimax ratio $\sqrt{\log(K)}$, it is not clear whether MS can achieve the same guarantee without further modifications.
The same issue happens with Thompson sampling -- the minimax ratio of $\sqrt{\log(K)}$ of Thompson sampling is achieved only after inflating the posterior probability~\cite[Theorem 1.3]{agrawal17nearoptimal}.
Also, we conjecture that MS is better than UCB under the delayed-reward setting since this is known to be true for randomized algorithms such as Thompson sampling~\cite{chapelle11anempirical}; see also \citet[Section 6]{dudik11efficient}.
It would be interesting to verify the conjecture theoretically and empirically.
Second, It would be interesting to further improve MS to match the regret guarantees of AdaUCB, which has the best-known regret bound in the literature.
Third, it would be interesting to generalize MS to the exponential family of distributions or to the linear bandit setting.
A natural candidate for the former is to take the MED algorithm~\cite{honda11asymptotically}, which is a generalization of MS as mentioned in related work, and perform finite time analyses.
Finally, while MS can be tuned to enjoy small expected regret, it tends to have a large variance.
It would be interesting to study the fundamental tradeoff between the expected pseudo regret and its variance.

\subsubsection*{Acknowledgements}
The authors would like to thank Tor Lattimore and Junya Honda for their valuable feedback and Nikos Karampatziakis for sharing Maillard sampling in social media.

\subsection*{References}
\bibliography{library-shared}

\clearpage
\appendix
\thispagestyle{empty}
\onecolumn \makesupplementtitle

%
%
\addcontentsline{toc}{section}{Appendix} 
\renewcommand \thepart{}
\renewcommand \partname{}
%
\part{} 

\vspace{-3em}
\parttoc 

\section{Related Algorithms}
\label{sec:relatedalg}

Beside Boltzmann exploration, there are a number of algorithms that shares a similar shape.
Among deterministic algorithms, IMED is the most similar one.
While IMED only has regret guarantees when rewards have (semi-)bounded support, IMED for the case of Gaussian that is reported in~\citet{lattimore18refining} has a striking similarity to MS:
\begin{align*}
  I_t = \arg \min_{i \in [K]} \frac{N_{t-1,i}}{2}\hDelta_{t-1,i}^2 + \log(N_{t-1,i})~.
\end{align*}
Except for the second term, the first term is exactly the negative of the exponent of MS.
In a similar vein, \citet{locatelli16anoptimal} propose an algorithm called APT for thresholding bandits.
While their problem is a pure exploration, their algorithm selects an arm by
\begin{align*}
  I_t = \arg \min_{i \in [K]} \sqrt{N_{t-1,i}} \hDelta'_{t-1,i}
\end{align*}
where $\hDelta'_{t-1,i} = |\hmu_{t-1,i} - \tau| + \eps$ where both $\tau$ and $\eps$ are given from the problem setup.
Except for the slight differences in the definition of $\hDelta'_{t-1,i}$, APT's index can be interpreted as the negative of MS's exponent.
Note, however, their estimated gap $\hDelta'_{t-1,i}$ is relative to a given problem parameter since the target reward level is given in thresholding bandit problems.
Thus, proof techniques from APT do not help analyzing MS that involves the gap based on the empirical best arm.
On the other hand, problem-dependent regret bounds necessarily include gaps $\Delta_i = \max_{j\in[K]} \mu_j - \mu_i$, which means that design of algorithms would be easier if we knew $\Delta_i$ since we can try to sample just enough number of times required by the lower bound.
Our results on MS reveals that one can directly use $\hDelta_i$ in the algorithm and perform a tight finite-time analysis of it.
We speculate that our regret analysis scheme can be useful for other similar bandit problems such as pure exploration.

Among many randomized stochastic bandit algorithms~\cite{kveton19garbage,kveton19perturbed,thompson33onthelikelihood,agrawal13further,kaufmann12thompson,korda13thompson}, Thompson sampling (TS) is the most popular one due to its simplicity and practical performance~\cite{chapelle11anempirical}.
While TS's sampling probabilities do not have a closed-form expression, the special case of $K=2$ arms reveals a connection between TS for Gaussian rewards and MS.
Suppose arm 1 has a higher mean reward than arm 2, and the reward noise is Gaussian with variance 1.
Let us use the Gaussian TS algorithm by~\citet{korda13thompson} whose posterior samples $X_1 \sim \cN(\hmu_1, 1/T_1)$ and $X_2 \sim \cN(\hmu_2, 1/T_2)$ where $T_1$ and $T_2$ are the pull counts of each arm.
Then, the probability of sampling arm 2 is $\PP(Z := X_2 - X_1 \ge 0)$.
One can show that $Z\sim \cN(\blue{-\hDelta} := \hmu_2 - \hmu_1, \fr{1}{T_1} + \fr{1}{T_2}  )$.
We can show that, with $\blue{\om^2} := \fr{1}{T_1} + \fr{1}{T_2} $,
\begin{align*}
  p_{t,2} = \PP(Z \ge 0) 
  &= \PP(\fr{Z-(-\hDelta)}{\om} \ge \fr{\hDelta}{\om}) 
  \\&=  1 - \Phi(\fr{\hDelta}{\om})
  \\&= \fr12 (1 - \text{erf}\del{\fr{\hDelta}{\sqrt{2}\om} } )
  \\&= \fr12 \text{erfc}\del{\fr{\hDelta}{\sqrt{2}\om} } 
  \\&\le \fr12 \fr{1}{\fr{\hDelta}{\om\sqrt{2}} \sqrt{\pi}} \exp\del{-\fr{\hDelta^2}{2\om^2} } 
\end{align*}
After a large enough iterations, note that $\om^2$ will be close to $1/T_2$ as most arm pulls will be on arm 1.
Then, except for the factor outside $\exp()$ above, the sampling probability almost coincides with MS.
Specifically, TS pulls arm 2 with probability $\frac{1}{2}\exp\del{-\fr{T_2\hDelta^2}{2} } $ roughly speaking and MS pulls arm 2 with probability like ${\exp({-\fr{T_2\hDelta^2}{2} })}/({1 + \exp({-\fr{T_2\hDelta^2}{2} })})$, which is asymptotically ${\exp\del{-\fr{T_2 \hDelta^2}{2} }}$.
Note that TS has $\text{erfc}(x)$ that decays slightly faster than $\exp(-x^2)$.
While this looks more efficient (i.e., less exploration), such a difference is rooted at the Gaussian reward assumption as opposed to the intrinsic benefit of the algorithmic framework. 
The MS algorithm presented in our paper is for sub-Gaussian rewards that include a much larger family of distributions, so MS must explore a bit more than Gaussian TS to maintain valid regret guarantees. 
Designing MS for Gaussian rewards is likely to result in a tighter exploration and possibly a contender to TS.

\section{Proofs for Maillard Sampling}
\label{Pfs_MS}

\subsection{Proof of Theorem~\ref{thm:ms}}
\label{sec:proof-ms}

We finish the proof by expanding on the term (F3).
Recall that (F3) is bounded by the following:
$$ \fr{4\sigma^2}{\eps^2} \sum_{q=1}^{\bar q - 1}  \frac{1}{q+\fr12} + \sum_{q=\bar q}^{\infty} \fr{1}{\exp( \fr{1}{4\sigma^2}(q+\fr12 )\eps^2) - 1}~.
$$
For the second term, note that, for $q \ge \bar q$, we have $\exp( \fr{1}{4\sigma^2}(q+\fr12 )\eps^2) \ge \exp(\fr{1}{4\sigma^2} (\bar q+\fr12 )\eps^2) =: \blue{\eta}$.
Using the fact that $X \ge \eta \implies \fr{1}{X-1} \le \fr{1}{1-\eta^{-1}} \fr1X$, 

\begin{align*}
  \sum_{q=\bar q }^{\infty} \fr{1}{\exp(\fr{1}{4\sigma^2}(q+\fr12)\eps^2) - 1}
  = \fr{1}{1-\eta^{-1}} \sum_{q=\bar q }^{\infty} \fr{1}{\exp(\fr{1}{4\sigma^2}(q+\fr12)\eps^2)}
    &= \fr{1}{1-\eta^{-1}}  \cd \fr{\exp(-\fr{1}{4\sigma^2}(\bar q + \fr12)\eps^2)}{1 - \exp(-\fr{1}{4\sigma^2}\eps^2)}
  \\&= \fr{1}{1-\eta^{-1}}  \cd \fr{\exp(-\fr{1}{4\sigma^2}(\bar q - \fr12)\eps^2)}{\exp(\fr{1}{4\sigma^2}\eps^2)-1}
  \\&\sr{(a)}{\le} 2 \cd \frac{\exp(-\fr{1}{8\sigma^2} \eps^2)}{\fr{1}{4\sigma^2}\eps^2}
  \le \frac{8\sigma^2}{\eps^2}
\end{align*}
where $(a)$ is by $e^z \ge z + 1$ and the choice of $\blue{\bar q} := \lt\lcl 1 \vee \del{\fr{4\sigma^2}{\eps^2} - \fr12}\rt\rcl$ that ensures $\eta \ge e$ and $\bar q \ge 1$.

For the first term, 
\begin{align*}
  \frac{4\sigma^2}{\eps^2}\sum_{q=1}^{\bar q - 1} \frac{1}{q+\fr12} 
  \le \frac{4\sigma^2}{\eps^2}\sum_{q=1}^{\bar q - 1} \frac{1}{q} 
  &\le \frac{4\sigma^2}{\eps^2}(1+\int_{q=1}^{\bar q - 1} \frac{1}{q} \diff q)
  \\&\le \frac{4\sigma^2}{\eps^2}\del{1 + \ln(\bar q - 1)} 
  \\&\le \frac{4\sigma^2}{\eps^2}\del{1 + \ln(\fr{4\sigma^2}{\eps^2} + \fr12)} ~.
\end{align*}
Hence
\begin{align*}
    \text{(F3)} &\le \fr{8\sigma^2}{\eps^2}+\frac{4\sigma^2}{\eps^2}\del{1 + \ln(\fr{4\sigma^2}{\eps^2} + \fr12)} 
    \\& =\frac{4\sigma^2}{\eps^2}\del{3 + \ln(\fr{4\sigma^2}{\eps^2} + \fr12)} 
    \\&=O\del{\fr{\sigma^2}{\eps^2}\del{1+\ln(1+\fr{\sigma^2}{\eps^2})}}~.
\end{align*}
Altogether with the choice of $u := \lt\lcl \fr{2\sigma^2(1+c)^2 \ln(\fr{T\Delta_a^2}{2\sigma^2})}{\Delta_a^2} \rt\rcl$, $\eps=\fr{c\Delta_a}{2(1+c)}$
\begin{align*}
    \EE [N_{T,a}]
    &\le u+ \text{(F1)}+\text{(F2)}+\text{(F3)}
    \\&= \lt\lcl \fr{2\sigma^2(1+c)^2 \ln(\fr{T\Delta_a^2}{2\sigma^2})}{\Delta_a^2} \rt\rcl+\fr{2\sigma^2}{\Delta_a^2} +\fr{2\sigma^2}{\del{\fr{c\Delta_a}{1+c} - \eps}^2}+O\del{\fr{\sigma^2}{\eps^2}\del{1+\ln(1+\fr{\sigma^2}{\eps^2})}}
    \\&=\lt\lcl \fr{2\sigma^2(1+c)^2 \ln(\fr{T\Delta_a^2}{2\sigma^2})}{\Delta_a^2} \rt\rcl+\fr{2\sigma^2}{\Delta_a^2}+\fr{8\sigma^2(1+c)^2}{c^2\Delta_a^2}+O\del{\fr{4(1+c)^2\sigma^2}{c^2\Delta_a^2}\del{1+\ln(1+\fr{4(1+c)^2\sigma^2}{c^2\Delta_a^2})}}
    \\&=\fr{2\sigma^2(1+c)^2 \ln(\fr{T\Delta_a^2}{\sigma^2} )}{\Delta_a^2} + O\del{1 \vee \del{\fr{(1+c)^2\sigma^2}{c^2\Delta_a^2}\ln\del{1+\fr{(1+c)^2\sigma^2}{c^2\Delta_a^2}}}}~.
\end{align*}
Hence
\begin{align*}
  \EE\Reg_T
  &=\sum_{a\in[K]: \Delta_a > 0}^{} \Delta_a \cd \EE [N_{T,a}]
\\&\le \sum_{a\in[K]: \Delta_a > 0}^{}\Bigg(\fr{2\sigma^2(1+c)^2 \ln(\fr{T\Delta_a^2}{\sigma^2} )}{\Delta_a}+ O\del{\Delta_a \vee \del{\fr{(1+c)^2\sigma^2}{c^2\Delta_a}\ln\del{1+\fr{(1+c)^2\sigma^2}{c^2\Delta_a^2}}}}\Bigg)~.
\end{align*}

\subsection{Proof of Corollary \ref{thm1_cor}}

\begin{proof}
From Section~\ref{sec:proof-ms} we have: MS satisfies that, \kjold{actually we should have ``$\vee 1$'' in the first term below} $\forall T\ge1$ and $c>0$, 
\begin{align*}
 \EE[N_{T,a}]\le \fr{2\sigma^2(1+c)^2 \ln(\fr{T\Delta_a^2}{\sigma^2} )}{\Delta_a^2} + O\del{1 \vee \del{\fr{(1+c)^2\sigma^2}{c^2\Delta_a^2}\ln\del{1+\fr{(1+c)^2\sigma^2}{c^2\Delta_a^2}}}}~.
\end{align*}
Therefore,
\begin{align*}
    \EE \Reg_T&=\sum_{a\in[K]:\Delta_a >0} \Delta_a \EE[N_{T,a}]
    \\&=\sum_{a\in[K]:\Delta_a<\Delta}\Delta_a \EE[N_{T,a}]+
        \sum_{a\in[K]:\Delta_a\ge \Delta}\Delta_a \EE[N_{T,a}] 
    \\&<T\Delta + K\del{\fr{2\sigma^2(1+c)^2 \ln(\fr{T\Delta^2}{\sigma^2})}{\Delta} + O\del{\fr{(1+c)^2\sigma^2}{c^2\Delta}\ln\del{1+\fr{(1+c)^2\sigma^2}{c^2\Delta^2}}}}+\sum_{a\in[K]:\Delta_a >0} O(\Delta_a)~.
\end{align*}
\kjold{actually, a case-by-case study needed. $\Delta^{-1}\ln(T\Delta^2)$ is decreasing only when $T\Delta^2 \le e^2$. So, for the regime that it is not decreasing, we should have $\Delta \del{1 \vee \del{\fr{(1+c)^2\sigma^2}{c^2\Delta_a^2}\ln\del{1+\fr{(1+c)^2\sigma^2}{c^2\Delta_a^2}}}}$ }
Then by choosing $\Delta=\Theta(\sigma\sqrt{\fr{K\ln(T)}{T}})$, the upper bound will become
\begin{align*}
    \EE \Reg_T&<O(\sigma\sqrt{KT\ln(T)})+O(\sigma\sqrt{KT\ln(T)})+\sum_{a\in[K]:\Delta_a >0} O(\Delta_a)
    \\&=O(\sigma\sqrt{KT\ln(T)})~.
\end{align*}
\end{proof}

\section{Proofs for Maillard Sampling$^+$}
\label{Pfs_MSP}

\subsection{Proof of Theorem~\ref{thm_IMS}}
\begin{proof}
Just like the proof of MS, it suffices to upper bound $\EE[N_a]$ for arm $a$ such that $\Delta_a >0$.
Let $T \ge K$.
First, using the definition of the initialization step, for all $u > 0$ we have
\begin{align*}
  \EE[N_{T,a}]&=
  \EE[N_{K,a}]+\EE[N_{T,a}-N_{K,a}]
\\&\le 1+v + \EE\sbr{\sum_{t=K+v+1}^T \one\cbr{ I_t = a \text{ and } N_{t-1,a} > v}}
  \\&=   1+v + \EE\sbr{\sum_{t=K+v}^{T-1} \one\cbr{ I_{t+1} = a \text{ and } N_{t,a} > v}}
  \\&=u+ \EE\sbr{\sum_{t=K-1+u}^{T-1} \one\cbr{ I_{t+1} = a \text{ and } N_{t,a} \ge u}}~.
\end{align*}
Let $\blue{\hmu_{t,\max}} = \max_{a} \hmu_{t,a}$.
Let $c > 0$.
Then, we consider the following branching conditions under $\cbr{I_{t+1} = a, N_t(a) \ge u}$: 
\begin{itemize}
    \item (F1) $\hmu_{t,\max} \ge \mu_1 - \eps, \hDelta_{t,a} > \fr{\Delta_a}{1+c}$ 
    \item (F2) $\hmu_{t,\max} \ge \mu_1 - \eps, \hDelta_{t,a} \le \fr{\Delta_a}{1+c}$
    \item (F3) $\hmu_{t,\max} < \mu_1 - \eps~.$ 
\end{itemize}

For case (F1), by definition of $p_{t+1}$ and $\blue{u} := \lt\lcl \fr{2\sigma^2(1+c)^2 \ln(\fr{T\Delta_a^2}{2\sigma^2}(1+\ln(1+\fr{T\Delta_a^2}{2\sigma^2})))}{\Delta_a^2} \rt\rcl$, we have 
\begin{align*}
    &\EE \sum_{t=K-1+u}^{T-1} \one\cbr{I_{t+1} = a,~ N_{t,a} \ge u,~ \hmu_{t,\max} \ge \mu_1 - \eps,~ \hDelta_{t,a} > \fr{\Delta_a}{1+c}}
  \\&\le  \sum_{t=K-1+u}^{T-1} \PP\del{  I_{t+1} = a \mid N_{t,a} \ge u,~ \hDelta_{t,a} > \fr{\Delta_a}{1+c}} \PP\del{ N_{t,a} \ge u,~ \hDelta_{t,a} > \fr{\Delta_a}{1+c}} 
  \\&\stackrel{(a)}{\le}  \sum_{t=K-1+u}^{T-1}   \fr 1 B \exp\del{-\fr{1}{2\sigma^2}u\del{\fr{\Delta_a}{1+c}}^2}\cd (1+\fr{D}{2\sigma^2}u\del{\fr{\Delta_a}{1+c}}^2) \cd 1
  \end{align*}
where $(a)$ is by the fact that for $D\in\lparen0,1\rbrack$, $(1+Dx)\cd \exp(-x)$ is decreasing in $x>0$.
Let $\blue{A} = \fr{2\sig^2 }{\Delta_a^2}$. 
Then, $u \ge (1+c)^2 A \ln(\fr T A ( 1 + \ln( 1 + \fr T A)))$.
The last line above is then bounded by 
  \begin{align*}
  &\fr T B \exp\del{-\fr{1}{A(1+c)^2}u}\cd (1+\fr{D}{A(1+c)^2}u) \cd 1
  \\&\le \fr{1}{B} \frac{A}{1 + \ln(1 + \fr T A)} \cd \del{1 + D\ln(\fr T A(1 + \ln(1 + \fr T A))  }
  \\&\le \fr{1}{B}\cd A \cd 2D
  \\&=   \fr{2D}{B} \cd \fr{2\sigma^2}{\Delta_a^2}~.
\end{align*}

For case (F2), note that 
\begin{align*}
  \cbr{\hmu_{t,\max} \ge \mu_1 - \eps} \cap \cbr{ \hDelta_{t,a} \le \fr{1}{1+c} \Delta_a} \subseteq \cbr{\hmu_{t,a} - \mu_a \ge \fr{c}{1+c}\Delta_a - \eps}~.
\end{align*}
Let  $\blue{\tau_{a}(k) }$ be the time step $t$ such that the arm a was pulled at $t$ and the number of arm pulls of arm a becomes $k$ at the end of the time step $t$; i.e., $\tau_a(k) = \min\cbr{t\ge1: N_{t,a} = k}$. 
We use the shortcut $\blue{\hmu_{a,(k)}} := \hmu_{\tau_a(k),a}$ to denote the sample mean of arm $a$ after pulling it for the $k$-th time, . 
Thus,
\begin{align*}
    &\EE \sum_{t=K-1+u}^{T-1} \one\cbr{I_{t+1} = a,~ N_{t,a} \ge u,~ \hmu_{t,\max} \ge \mu_1 - \eps,~ \hDelta_{t,a} \le \fr{\Delta_a}{1+c}}
  \\&\le \sum_{t=K-1+u}^{T-1} \PP\del{I_{t+1} = a,~ \hmu_{t,a} - \mu_a \ge \fr{c\Delta_a}{1+c} - \eps}
  \\&\le \EE \sbr{\sum_{k=1}^{\infty} \sum_{t=\tau_1(k)}^{\tau_1(k+1)-1} \one \cbr{I_{t+1} = a} \cd \one \cbr{ \hmu_{t,a} - \mu_a \ge \fr{c\Delta_a}{1+c} - \eps}}
  \\&= \EE \sbr{\sum_{k=1}^{\infty}  \one \cbr{ \hmu_{a,(k)} - \mu_a \ge \fr{c\Delta_a}{1+c} - \eps} \sum_{t=\tau_1(k)}^{\tau_1(k+1)-1}\one \cbr{I_{t+1} = a} }
  \\&= \EE \sbr{\sum_{k=1}^{\infty}  \one \cbr{ \hmu_{a,(k)} - \mu_a \ge \fr{c\Delta_a}{1+c} - \eps}  }
  \\&= \sum_{k\ge1} \PP\del{ \hmu_{a,(k)} - \mu_a \ge \fr{c\Delta_a}{1+c} - \eps} \tag{$ N_{t,a}\ge 1, \forall t\ge K$, since $u>1$}
  \\&\le \sum_{k\ge1} \exp\del{-\fr{1}{2\sigma^2}k\del{\fr{c\Delta_a}{1+c} - \eps}^2}
   ~= \fr{1}{\exp\del{\fr{1}{2\sigma^2}\del{\fr{c\Delta_a}{1+c} - \eps}^2} - 1}
   ~\le \fr{2\sigma^2}{\del{\fr{c\Delta_a}{1+c} - \eps}^2}~.
\end{align*}

We now consider case (F3).
This is the case where the best arm behaves badly.
Let $\blue{\cF_t}$ be the $\sigma$-algebra generated by $(A_1,R_1,\ldots,A_{t},R_{t})$.
We define the shortcut $\blue{\PP_t(\cE)} := \PP(\cE \mid \cF_t)$ and similarly $\blue{\EE_t[X]} = \EE[X\mid \cF_t]$.

Recall $B\ge1$ and $C,D\in(0,1]$.
Let $Z = \sum_a \hp_{t+1,a}$.
Note that, for every $a\in[K]$, the following is true regardless of whether $a$ is the empirical best arm or not:
\begin{align*}
   \fr1Z \exp\del{-\fr{1}{2\sigma^2}N_{t,a}\hDelta^2_{t,a}+\ln(1+\fr{D}{2\sigma^2} N_{t,a}\hDelta^2_{t,a})} 
   \le \PP_t(I_{t+1}=a) 
   \le \frac{B(1 + C\ln(1 + \ln(\frac{t}{N_{t,a}})))}{Z}~.
\end{align*}
This helps us relate the probability of pulling arm $a$ with arm 1.
Thus, under the event $\hmu_{t,\max} < \mu_1 - \eps$, we have
\begin{align}
  \begin{aligned}
      \PP_t(I_{t+1}=a) 
   \le \fr{B \cd (1 + C\ln(1 + \ln(\frac{t}{N_{t,a}})))}{\exp(-\fr{1}{2\sigma^2}N_{t,1} \cd (\mu_1-\eps-\hmu_{t,1})_+^2)
    \cd (1+\fr{D}{2\sigma^2}N_{t,1}\cd (\mu_1-\eps-\hmu_{t,1})_+^2)
    } \PP_t(I_{t+1}=1)~.
  \end{aligned}
\end{align}

Let $\blue{\tau_{1}(k) }$ be the time step $t$ such that the arm 1 was pulled at $t$ and the number of arm pulls of arm 1 becomes $k$ at the end of the time step $t$; i.e., $\tau_1(k) = \min\cbr{t\ge1: N_{t,1} = k}$. 
We use the shortcut $\blue{\hmu_{1,(k)}} := \hmu_{\tau_1(k),1}$. 
With this, we can bound (A) as follows:
\begin{align*}
  &\EE \sum_{t=K-1+u}^{T-1} \one\cbr{I_{t+1} = a,~ N_{t,a} \ge u,~ \hmu_{t,\max} < \mu_1 - \eps}
  \\&\le \EE\sbr{ \sum_{t=K-1+u}^{T-1} \EE_t\sbr{\one\cbr{I_{t+1} = a,~ \hmu_{t,\max} < \mu_1 - \eps} }  }
  \\&= \EE\sbr{ \sum_{t=K-1+u}^{T-1} \one\cbr{ \hmu_{t,\max} < \mu_1 - \eps} \EE_t\sbr{\one\cbr{I_{t+1} = a} }  } 
  \\&\le \EE\sbr{ \sum_{t=K-1+u}^{T-1} \one\cbr{ \hmu_{t,\max} < \mu_1 - \eps}\cd \fr{\exp\del{\fr{1}{2\sigma^2} N_{t,1} \cd (\mu_1-\eps-\hmu_{t,1})^2}}{1+\fr{D}{2\sigma^2}\cd N_{t,1}\cd (\mu_1-\eps-\hmu_{t,1})^2}\cd \del{B\cd(1+C\ln(1+\ln(\fr{t}{N_{t,a}})))} \EE_t\sbr{ \one\cbr{I_{t+1} = 1} } } 
  \\&= \EE\sbr{ \sum_{t=K-1+u}^{T-1} \EE_t \sbr{\one\cbr{ \hmu_{t,\max} < \mu_1 - \eps}\cd \fr{\exp\del{\fr{1}{2\sigma^2} N_{t,1} \cd (\mu_1-\eps-\hmu_{t,1})^2}}{1+\fr{D}{2\sigma^2}\cd N_{t,1}\cd (\mu_1-\eps-\hmu_{t,1})^2} \cd \del{B\cd(1+C\ln(1+\ln(\fr{t}{N_{t,a}})))} \cd \one\cbr{I_{t+1} = 1}  }}
  \\&= \EE\sbr{ \sum_{t=K-1+u}^{T-1} \one\cbr{ \hmu_{t,\max} < \mu_1 - \eps}\cd \fr{\exp\del{\fr{1}{2\sigma^2}N_{t,1} \cd (\mu_1-\eps-\hmu_{t,1})^2}}{1+\fr{D}{2\sigma^2}\cd N_{t,1}\cd (\mu_1-\eps-\hmu_{t,1})^2}
  \cd \del{B\cd(1+C\ln(1+\ln(\fr{t}{N_{t,a}})))}\cd \one\cbr{I_{t+1} = 1}  }
  \\&\le \EE\sbr{ \sum_{t=K-1+u}^{T-1} \one\cbr{ \hmu_{t,1} < \mu_1 - \eps}\cd \fr{\exp\del{\fr{1}{2\sigma^2}N_{t,1} \cd (\mu_1-\eps-\hmu_{t,1})^2}}{1+\fr{D}{2\sigma^2}\cd N_{t,1}\cd (\mu_1-\eps-\hmu_{t,1})^2} \cd \del{B\cd(1+C\ln(1+\ln(\fr{t}{N_{t,a}})))}\cd
  \one\cbr{I_{t+1} = 1}  }
  \\&= \EE\sbr{ \sum_{k=1}^{\infty} \sum_{t=\tau_1(k)}^{\tau_1(k+1)-1} \one\cbr{ \hmu_{1,(k)} < \mu_1 - \eps}\cd \fr{\exp\del{ \fr{1}{2\sigma^2} k \cd (\mu_1-\eps-\hmu_{1,(k)})^2}}{1+\fr{D}{2\sigma^2}\cd k\cd (\mu^*-\eps-\hmu_{1,(k)})^2}
  \cd \del{B\cd(1+C\ln(1+\ln(\fr{T}{u})))}\cd \one\cbr{I_{t+1} = 1} }
  \\&\le \EE\sbr{ \sum_{k=1}^{\infty}  \one\cbr{ \hmu_{1,(k)} < \mu_1 - \eps}\cd \fr{\exp\del{\fr{1}{2\sigma^2} k \cd (\mu_1-\eps-\hmu_{1,(k)})^2}}{1+\fr{D}{2\sigma^2}\cd k\cd (\mu_1-\eps-\hmu_{1,(k)})^2}
  \cd \del{B\cd(1+C\ln(1+\ln(\fr{T}{u})))} \sum_{t=\tau_1(k)}^{\tau_1(k+1)-1} \one\cbr{I_{t+1} = 1}  }
  \\&= \EE\sbr{ \sum_{k=1}^{\infty}  \one\cbr{ \hmu_{1,(k)} < \mu_1 - \eps}\cd \fr{\exp\del{\fr{1}{2\sigma^2} k \cd (\mu_1-\eps-\hmu_{1,(k)})^2}}{1+\fr{D}{2\sigma^2}\cd k\cd (\mu_1-\eps-\hmu_{1,(k)})^2}\cd \del{B\cd(1+C\ln(1+\ln(\fr{T}{u})))} }
  \\&= \EE\sbr{ \sum_{k=1}^{\infty}  \one\cbr{ \blue{\bar X_k} > \eps}\cd \fr{\exp\del{\fr{1}{2\sigma^2} k \cd (\blue{\bar X_k} - \eps)^2}}{1+\fr{D}{2\sigma^2}k\cd (\blue{\bar X_k} - \eps)^2} \cd B \cd (1 + C\ln(1 + \ln(\frac{T}{u}))) } 
\end{align*}
where the last line is by defining $\blue{\barX_k} := \mu_1 - \hmu_{1,(k)}$.

We bound the following term:
\begin{align*}
  &  \EE\sbr{ \sum_{k=1}^{\infty}  \one\cbr{ {\bar X_k} > \eps}\cd \fr{\exp\del{\fr{1}{2\sigma^2} k \cd ({\bar X_k} - \eps)^2}}{1+\fr{D}{2\sigma^2}k\cd ({\bar X_k} - \eps)^2} } 
  \\&\le \EE\sbr{ \sum_{k=1}^{\infty}  \sum_{q=1}^\infty \one\cbr{ \mu_1 - \hmu_{1,(k)} \in \lt\lparen \eps+\fr{q-1}{2}\eps,~ \eps + \fr{q}{2} \eps \rt\rbrack  } \cd \fr{\exp\del{\fr{1}{2\sigma^2} k \cd \del{\fr{q\eps}{2}}^2}}{1+\fr{D}{2\sigma^2}k(\fr{q\eps}{2})^2} }
  \\&=  \sum_{k=1}^{\infty}  \sum_{q=1}^\infty \PP\del{ \mu_1 - \hmu_{1,(k)} \in \lt\lparen \eps+\fr{q-1}{2}\eps,~ \eps + \fr{q}{2} \eps \rt\rbrack  } \cd \fr{\exp\del{\fr{1}{2\sigma^2} k \cd \del{\fr{q\eps}{2}}^2}}{1+\fr{D}{2\sigma^2}k(\fr{q\eps}{2})^2} 
  \\&\le  \sum_{k=1}^{\infty}  \sum_{q=1}^\infty \exp\del{-\fr{1}{2\sigma^2}k \del{\fr{q+1}{2} \eps}^2} \cd \fr{\exp\del{\fr{1}{2\sigma^2}k \cd \del{\fr{q\eps}{2}}^2}}{1+\fr{D}{2\sigma^2}k(\fr{q\eps}{2})^2}
  \\&=    \sum_{k=1}^{\infty}  \sum_{q=1}^\infty \fr{\exp \del{-\fr{1}{4\sigma^2}(q+\fr{1}{2})k\eps^2}}{1+\fr{D k q^2 \eps^2}{8\sigma^2}}
   \\&<\sum_{k=1}^{\infty}  \sum_{q=1}^\infty \fr{\exp \del{-\fr{1}{4\sigma^2}(q+\fr{1}{2})k\eps^2}}{1\vee\fr{D k q^2 \eps^2}{8\sigma^2}}
   \\&=\sum_{q=1}^{\infty} \del{ \sum_{k=1}^{k<\fr{8\sigma^2}{D q^2 \eps^2}} \exp \del{-\fr{1}{4\sigma^2}(q+\fr{1}{2})k\eps^2}+\sum_{k\ge\fr{8\sigma^2}{D q^2 \eps^2}}^{\infty} \fr{8\sigma^2\exp \del{-\fr{1}{4\sigma^2}(q+\fr{1}{2})k\eps^2}}{D k q^2 \eps^2}} ~.
\end{align*}

For the first term,
let us define $\blue Q = \fr{8\sig^2}{D\eps^2}$ and $\blue R = \fr{\eps^2}{4\sig^2}$.
Then,
\begin{align*}
    \sum_{q=1}^{\infty}  \sum_{k=1}^{k<\fr{Q}{q^2}} \exp \del{-R(q+\fr{1}{2})k}
    &\le \sum_q \frac{1-\exp(-R(q+ \fr12)\fr{Q}{q^2})}{\exp(R(q+\fr12)) - 1}
    \\&\sr{(a)}{\le} Q\sum_q \fr{1}{q^2}
    \\&= 2Q 
\end{align*}
where $(a)$ is by $e^z \ge z + 1$.

For the second term,
\begin{align*}
    \sum_{q=1}^{\infty} \sum_{k\ge Q/q^2} \fr{Q}{k q^2} \exp \del{-R(q+\fr{1}{2})k}
    \le  \underbrace{\sum_{q=1}^{\infty}\sum_{k\in \lbrack Q/q^2, Q/q\rbrack} \fr{Q}{k q^2} \exp \del{-R(q+\fr{1}{2})k} }_{\textstyle =: G_1} + \underbrace{\sum_{q=1}^{\infty} \sum_{k> Q/q} \fr{Q}{k q^2} \exp \del{-R(q+\fr{1}{2})k} }_{\textstyle =: G_2}~.
\end{align*}
We bound $G_1$ as follows:
\begin{align*}
    G_1
    &\le \sum_{q=1}^{\infty}\sum_{k\in \lbrack Q/q^2, Q/q\rbrack} \fr{Q}{k q^2}
    \\&\stackrel{(a)}{\le} \sum_{q=1}^{\infty} \fr{Q}{q^2} \del{ 1 +  \ln(\frac{Q/q}{Q/q^2}) }
    \\&= \sum_{q=1}^{\infty} \fr{Q}{q^2} \del{ 1 +  \ln(q)}
    \\&= Q(\fr{\pi^2}{6} + \sum_{q=1}^\infty \frac{\ln(q)}{q^2})    
    \\&\le 3 Q
\end{align*}
where $(a)$ is by the fact that 
\begin{align}\label{eq:sum-by-integral}
  \sum_{i=a}^b f(i) \le \max_{x\in[a,b]} f(x) + \int_{a}^b f(x) \diff x  ~.
\end{align}
We bound $G_2$ as follows:
\begin{align*}
    G_2
    &\le \sum_{q=1}^{\infty} \sum_{k> Q/q} \fr{Q}{k q^2} \exp \del{-R(q+\fr{1}{2})k} 
    \\&\le \sum_{q=1}^{\infty} \fr{1}{q} \sum_{k> Q/q}  \exp \del{-R(q+\fr{1}{2})k}
    \\&\le \sum_{q=1}^{\infty} \fr{1}{q} \frac{\exp(-R(q+\fr12)((1 \vee \fr Q q)-1) )}{\exp(R(q+ \fr 1 2)) - 1} \tag{note $k\ge1$}
    \\&\le \sum_{q=1}^{\infty} \fr{1}{q} \frac{1}{R(q+\fr 1 2)} \tag{by $e^z \ge z-1$}
    \\&\le \fr 2 R~.
\end{align*}
Thus,
\begin{align*}
    \EE\sbr{ \sum_{k=1}^{\infty}  \one\cbr{ {\bar X_k} > \eps}\cd \fr{\exp\del{\fr{1}{2\sigma^2} k \cd ({\bar X_k} - \eps)^2}}{1+\fr{D}{2\sigma^2}k\cd ({\bar X_k} - \eps)^2}  }
    \le \del{\frac{40}{D} + 8} \fr{\sig^2}{\eps^2}~.
\end{align*}

Hence \begin{align*}
   \del{B \cd (1 + C\ln(1 + \ln(\frac{T}{u})))} \cd \EE\sbr{ \sum_{k=1}^{\infty}  \sum_{q=1}^\infty \one\cbr{ \bar X_k \in \lt\lparen\eps+\fr{q-1}{2}\eps,~ \eps + \fr{q}{2} \eps \rt\rbrack  } \cd \fr{\exp\del{\fr{1}{2\sigma^2} k \cd \del{\fr{q\eps}{2}}^2}}{1+\fr{D}{2\sigma^2}\cd k(\fr{q\eps}{2})^2} }~.
\end{align*} is upper bounded by
\begin{align*}
    & \del{B \cd (1 + C\ln(1 + \ln(\frac{T}{u})))} \cd  \del{\frac{40}{D} + 8} \fr{\sig^2}{\eps^2}
    \\& <\del{B \cd (1 + C\ln(1 + \ln(\frac{T}{\fr{2\sigma^2(1+c)^2\ln(T\Delta_a^2/(2\sigma^2))}{\Delta_a^2}})))} \cd  \del{\frac{40}{D} + 8} \fr{\sig^2}{\eps^2}
     \\&<O\del{B\cd (1+C(\ln(1+\ln(\fr{T\Delta_a^2}{\sigma^2}))))}\cd \del{\frac{40}{D} + 8} \fr{\sig^2}{\eps^2}
\end{align*}
where the last inequality is by the safe assumption of $\ln(T\Delta_a^2/(2\sig^2)) \ge 1$ since otherwise we have a trivial bound on $u$ as follows: $\frac{2\sig^2}{\Delta_a^2} e > T \ge u$.
Altogether, we choose $\blue{\eps} := \fr{c\Delta_a}{2(1+c)} $ and show that
\begin{align*}
  \EE[N_{t,a}]
  &\le \fr{2\sigma^2(1+c)^2 \ln(\fr{T\Delta_a^2}{2\sigma^2}(1+\ln(1+\fr{T\Delta_a^2}{2\sigma^2})))}{\Delta_a^2}
  + \frac{(1+2D)\sigma^2}{B\Delta_a^2} + \fr{2\sigma^2}{\del{\fr{c\Delta_a}{1+c} - \eps}^2} 
 \\ &+O\del{B\cd (1+C(\ln(1+\ln(\fr{T\Delta_a^2}{\sigma^2}))))}\cd \del{\frac{40}{D} + 8} \fr{\sig^2}{\eps^2}
   + O(1)
  \\&\le \fr{2\sigma^2(1+c)^2 \ln(\fr{T\Delta_a^2}{2\sigma^2}(1+\ln(1+\fr{T\Delta_a^2}{2\sigma^2})))}{\Delta_a^2} + O\del{1 + \del{\fr{BC}{D} + \fr{1}{B} + \fr{(1+c)^2}{c^2}}\cd \fr{\sigma^2\ln(1+\ln(\fr{T\Delta_a^2}{\sigma^2}))}{\Delta_a^2} }~.
\end{align*}

In terms of the upper bound of the regret, for every $T\ge1$ and $c>0$, 
\begin{align*}
    \EE \Reg_T &= \sum_{a\in[K]:\Delta_a>0} \Delta_a \cd \EE[N_{t,a}]
    \\&\le \sum_{a\in[K]:\Delta_a>0} \fr{2\sigma^2(1+c)^2 \ln(\fr{T\Delta_a^2}{\sigma^2}(1+\ln(1+\fr{T\Delta_a^2}{\sigma^2})))}{\Delta_a} + O\del{\Delta_a + \del{\fr{BC}{D} + \fr{1}{B} + \fr{(1+c)^2}{c^2}} \fr{\sigma^2\ln(1+\ln(\fr{T\Delta_a^2}{\sigma^2}))}{\Delta_a}}
     \\&\le \sum_{a:\Delta_a>0} \fr{2\sigma^2(1+c)^2 \ln(\fr{T\Delta_a^2}{2\sigma^2}(1+\ln(1+\fr{T\Delta_a^2}{2\sigma^2})))}{\Delta_a} + O\del{\Delta_a + \del{\fr{BC}{D} + \fr{1}{B} + \fr{(1+c)^2}{c^2}} \fr{\sigma^2\ln(1+\ln(\fr{T\Delta_a^2}{\sigma^2}))}{\Delta_a}}
\end{align*}
 where we omit the dependence on $C$ for brevity.
 
To verify the asymptotic optimality, let us choose  $c = \ln^{-1/4}(T)$ and take $B$, $C$, and $D$ as absolute constants.
Then,
\begin{align*}
    \lim_{T\to\infty} \fr{\EE[N_{t,a}]}{\ln(T)}
    &\le \lim_{T\to\infty} \fr{2\sigma^2(1+c)^2 \ln(\fr{T\Delta_a^2}{\sigma^2}(1+\ln(1+\fr{T\Delta_a^2}{\sigma^2})))}{\Delta_a^2 \ln(T)} +\lim_{T\to\infty} \fr{O\del{1 \vee \del{\fr{\sigma^2(1+c)^2\ln(1+\ln(\fr{T\Delta_a^2}{\sigma^2}))}{ c^2\Delta_a^2}}}}{\ln(T)}
    \\&=\lim_{T\to\infty} \fr{2\sigma^2(1+c)^2 \ln(\fr{T\Delta_a^2}{\sigma^2}(1+\ln(1+\fr{T\Delta_a^2}{\sigma^2})))}{\Delta_a^2 \ln(T)}
    \\&=\fr{2\sigma^2}{\Delta_a^2}~.
\end{align*}
This shows the asymptotic optimality of \MSP.

\end{proof}

\subsection{Proof of Corollary \ref{thm2_cor}}
\begin{proof}
Let us set $B$, $C$ and $D$ as absolute constants.
From the Proof of \MSP in Appendix \ref{Pfs_MSP}, in terms of the upper bound of the regret, 
\begin{align*}
   \EE \Reg_T &= \sum_{a\in[K]:\Delta_a>0} \Delta_a \cd \EE[N_{t,a}]
    \\ &= \sum_{a:\Delta_a<\Delta} \Delta_a \cd \EE[N_{t,a}]+
    \sum_{a:\Delta_a \ge \Delta} \Delta_a \cd \EE[N_{t,a}]
    \\ &< T\Delta + K \del{\fr{2\sigma^2(1+c)^2 \ln(\fr{T\Delta^2}{\sigma^2}(1+\ln(1+\fr{T\Delta^2}{\sigma^2})))}{\Delta} + O\del{\sigma^2\fr{(1+c)^2\ln(1+\ln(\fr{T\Delta_a^2}{\sigma^2}))}{c^2\Delta}}}+\sum_{a:\Delta_a\ge \Delta} O(\Delta_a) ~.
\end{align*}
By choosing $\Delta=\Theta(\sigma\sqrt{\fr{K\ln(K)}{T}})$, we have:
\begin{align*}
    \EE \Reg_T
    &=O(\sigma\sqrt{KT\ln(K)})+K
    \del{\fr{2\sigma^2(1+c)^2 \ln(\fr{T\Delta^2}{\sigma^2}(1+\ln(1+\fr{T\Delta^2}{\sigma^2})))}{\Delta} + 
    O \del{\sigma^2\fr{(1+c)^2\ln(1+\ln(\fr{T\Delta_a^2}{\sigma^2}))}{c^2\Delta}}}\\&\qquad + O(\sum_{a\in[K]} \Delta_a)
    \\&= O\del{\sigma\sqrt{KT\log(K)}} ~.
\end{align*}
Hence this algorithm achieves both $\sqrt{\ln(K)}$ minimax ratio and the asymptotically optimality.
\end{proof}



\end{document}